\def\eqref#1{equation~\ref{#1}}
\def\1{\bm{1}}
\DeclareMathAlphabet{\mathsfit}{\encodingdefault}{\sfdefault}{m}{sl}
\SetMathAlphabet{\mathsfit}{bold}{\encodingdefault}{\sfdefault}{bx}{n}
\newcommand{\red}[1]{{\color{red}#1}}
\theoremstyle{plain}
\newtheorem{theorem}{Theorem}[section]
\theoremstyle{definition}
\newtheorem{definition}[theorem]{Definition}
\theoremstyle{remark}
\newtheorem{prop}{Proposition}
\newcommand{\vpp}[1]{\textcolor{blue}{VP: #1}}
\newcommand{\RF}{$\text{R}^4$}
\newcommand{\RT}{$\text{R}^3$}
\title{Model Guidance via Robust Feature Attribution}
\author{\name Mihnea Ghitu \email mihnea.ghitu20@imperial.ac.uk \\
      \addr Imperial College London
      \AND
      \name Vihari Piratla \email viharipiratla@gmail.com \\
      \addr University of Cambridge
      \AND
      \name Matthew Wicker \email m.wicker@imperial.ac.uk \\
      \addr Imperial College London}
\begin{document}

\maketitle
\begin{abstract}

Controlling the patterns a model learns is essential to preventing reliance on irrelevant or misleading features. Such reliance on irrelevant features, often called shortcut features, has been observed across domains, including medical imaging and natural language processing, where it may lead to real-world harms. A common mitigation strategy leverages annotations (provided by humans or machines) indicating which features are relevant or irrelevant. These annotations are compared to model explanations, typically in the form of feature salience, and used to guide the loss function during training. Unfortunately, recent works have demonstrated that feature salience methods are unreliable and therefore offer a poor signal to optimize. In this work, we propose a simplified objective that simultaneously optimizes for explanation robustness and mitigation of shortcut learning. Unlike prior objectives with similar aims, we demonstrate theoretically why our approach ought to be more effective. Across a comprehensive series of experiments, we show that our approach consistently reduces test-time misclassifications by 20\% compared to state-of-the-art methods. We also extend prior experimental settings to include natural language processing tasks. Additionally, we conduct novel ablations that yield practical insights, including the relative importance of annotation quality over quantity. Code for our method and experiments is available at:
%\href{https://anonymous.4open.science/r/ModelGuidanceViaRobustFeatureAttribution-7417}{https://anonymous.4open.science/r/ModelGuidanceViaRobustFeatureAttribution-7417}.
\href{https://github.com/Mihneaghitu/ModelGuidanceViaRobustFeatureAttribution}{https://github.com/Mihneaghitu/ModelGuidanceViaRobustFeatureAttribution}.

\end{abstract}

\iffalse
\vpp{Change the abstract to discuss in this order:
\begin{itemize}
\item We are interested in training better generalizing models with the same number of examples. 
\item Contemporary approaches rely on model explanations and human-provided inputs. 
\item However, many existing model explanations and especially gradient-based ones are known to be a poor approximation of the model. 
\item We propose ....
\item Our results ...
\item Pointer to code and a broad statement about the potential impact of our work.
\end{itemize}
}

\fi
\section{Introduction}
\label{sec:intro}

%\vp{Need a stronger first para, divert focus from explanations to learning from explanations.}\mw{I agree with this and am taking inspiration from your previous comments to move away from learning from explanation to mitigation of shortcut learning.}
Machine learning (ML) has seen tremendous progress in the last decade, culminating in mainstream adoption in a variety of domains. Deep neural networks (DNN) sit at the core of this progress, due to their remarkable performance and ease of deployment \citep{deeplian}. With applications in wide-ranging domains \citep{bommasani2021opportunities}, the promise of impact comes with the potential for substantial harm when deploying models that may be dependent on irrelevant and misleading feature patterns present in their particular training dataset \citep{rrr}. 

A model misgeneralizing due to its reliance on incidental correlations is often referred to as {\it shortcut learning} \citep{rrr,  geirhos2020shortcut, heo_perturbations}. Leveraging such shortcuts during training significantly hinders the reliable deployment of models in safety-critical domains. 
For instance, models were documented to exploit dataset-specific incidental correlations such as hospital tags in chest x-rays when diagnosing pneumonia \citep{zech_pneum_tags,degrave_pneum_shortcut} or bandage in skin lesions when diagnosising skin cancer~\citep{rieger2020interpretations}. In natural language processing, models often rely incorrectly on pronouns and proper nouns when making decisions \citep{mccoy2019right}, leading to bias in downstream tasks such as hiring and loan approvals \citep{rudinger2018gender}. Left unchecked, learning and relying upon such shortcuts may lead to unintended harms at deployment time. Thus, understanding and mitigating shortcut learning has become an important area of research that has accumulated rich benchmarks and methodologies \citep{rrr, geirhos2020shortcut, singla2022salientimagenet}.  %Ensuring trustworthiness of these systems is critical for their responsible adoption. %Explainability is essential of trust and has already been incorporated into preliminary AI regulations including GDPR and the California AI Act\vp{citation needed}. 

%In the idealized case, explainability enables end-users understand why they were issued a certain prediction and allows model developers to understand the failure-modes of their models. A critical failure-mode that explainability addresses is shortcut learning: 
%The reliance of machine learning models on features not related to the underlying task. 
%In medical imaging models trained on chest x-rays exploited dataset-specific incidental correlations such as hospital tags when diagnosing pneumonia\citep{zech_pneum_tags,degrave_pneum_shortcut}, and in skin cancer images when bandage color was used by the model to determine diagnosis~\citep{rieger2020interpretations}. Understanding and mitigating shortcut learning has become an important area of research that has accumulated rich benchmarks and methodologies. 

The best-performing approaches to mitigating shortcut learning involves the use of per-input feature-level annotations that indicate if a feature in a given input is a \textit{core} feature or a \textit{non-core} feature that may represent a shortcut (e.g., the background of an image) \citep{heo_perturbations}. Throughout the paper we will refer to non-core features simply as \textit{masked} features. Methods that make use of such annotations are commonly termed Machine Learning from Explanations (MLX) methods. A variety of MLX approaches have been proposed \citep{ lee2022diversify, heo_perturbations}, and a common theme across state-of-the-art methods is the use of model explanations to regularize dependence on masked features, pioneered in \textit{right for the right reasons} \citep{rrr}. The model explanations used in MLX methods come in the form of feature attribution explanations which assign a score to each feature indicating how important the score is to the model prediction. By ensuring that masked features have low importance (achieved via regularizing the loss function) MLX methods mitigate shortcut learning ~\citep{jia2018right, kavumba2021learning, heo_perturbations, rrr, jia2018right, kavumba2021learning, heo_perturbations}.  

However, these methods assume that feature attribution explanations provide a faithful signal of model behavior, an assumption increasingly called into question. Recent work shows that the most popular form of feature importance, gradient-based attributions, can be unstable and easily manipulated, even without changing model predictions \citep{dombrowski2019explanations, r_expl_constr}. This fragility stems from the non-linear nature of deep networks, which allows imperceptible input changes to dramatically alter gradient-based explanations—much like adversarial attacks on outputs \citep{goodfellow2014explaining}. As a result, using only the gradient at a single input point in conjunction with annotation masks may not reliably suppress shortcut learning because out-of-distribution data might render the explanations unusable. Moreover, prior MLX works largely overlook how annotation quality and availability affect performance and inference-time generalizability, leaving important practical considerations unexplored.

In this work, we address these shortcomings by first proposing that mitigating reliance on shortcut features would be substantially more effective if one additionally optimizes for the reliability of the feature saliency explanations. We propose a robust variant of right for right reasons (RRR) that accounts for the noise in feature importance scores, which we dub \textit{robustly right for the right reasons} ({\RF}). {\RF} regularizes the feature importance in a high-dimensional ball around the training points rather than regularizing the feature importance at the training point alone. As a result, the objective required an inner optimization over a high-dimensional manifold, which is intractable. Thus, we instantiate three variants of {{\RF}} each implementing a different approximate solution to the optimization problem: \textit{Rand}-{{\RF}} takes the maximum over perturbations sampled from the domain of the optimization, \textit{Adv}-{{\RF}} uses first-order optimization to search the domain of the optimization, and \textit{Cert}-{{\RF}} uses convex-relaxations of the problem to compute an upper-bound the solution to the optimization problem. We compare each of these methods with a series of state-of-the-art MLX approaches with various datasets. We employ two synthetic datasets: the well-studied DecoyMNIST dataset~\citep{rrr} as well as a novel shortcut learning benchmark derived from a medical image diagnosis problem \citep{medmnistv2}. Through further evaluation on three real-world benchmarks, we demonstrate consistent gains of {\RF} over the state-of-the-art by a substantial margin (a raw increase of 6\% accuracy on average). %Moreover, in a series of ablations, we find that {\RF} approaches need less data overall to avoid shortcut learning compared with prior approaches, signaling the reduction of a major barrier to adoption for MLX methods in practice. 
In summary, this paper makes the following contributions:
\begin{itemize}
    \item We establish a novel, adversarial approach to machine learning from explanations that we name \textit{robustly right for the right reasons} or {{\RF}}.
    \item We propose and implement three (approximate) solutions to the intractable inner optimization of {{\RF}}: a statistical approach, a first-order optimization approach, and a convex-relaxation based approach and demonstrate their scaling to models as large as ResNet-18 and BERT.
    %\vpp{, and demonstrate their scaling to model sizes as large as [how many B/M params?]} \st{enabling our algorithm to be applied to models of any scale.}
    %\item \vpp{Through a series of ablations, we illustrate that {\RF} can effectively mitigate shortcut learning with annotations on just [what fraction or number] examples of the training set.}\st{In a series of experiments we compare the {\RF} approaches with prior state-of-the-art MLX methods and find that they outperform prior approaches across all datasets tested, and even performs well when only a small percentage of inputs have labelled shortcut features.}

    \item Through a series of experiments, we demonstrate that {\RF} outperforms prior state of the art MLX approaches across all datasets and can effectively mitigate shortcut learning with annotations on just $20\%$ of the examples in the training set.
\end{itemize}

\section{Related Works}

% MLX Related works
% ------------------------
% rrr
% schramowski2020making
% selvaraju2019taking
% moayeri2022comprehensive
% heo_perturbations
%\vpp{Can we organize this section under sub-headings?} 

\textbf{Shortcut Learning.} The phenomena of models learning non-generalizing dataset artifacts is referred to as {\it shortcut learning} and is well-studied~\citep{geirhos2020shortcut,shah2020pitfalls}. Mitigating shortcut learning has been approached from multiple fronts, the three major themes are: (a) annotating training examples with group identity~\citep{sagawadistributionally, ye2024spurious} to delineate examples with positive and negative incidental correlation, (b) diversifying the training set bias-free or bias nullifying examples~\citep{lee2022diversify}, learning from explicit annotation of relevant and irrelevant parts of input called {\it machine learning from explanations} (MLX)~\citep{rrr, rieger2020interpretations, heo_perturbations}. MLX is the focus of our work, which enables specifying the irrelevant features explicitly \citep{sagawadistributionally, ye2024spurious, rrr}. Additionally, a line of works augments the provided human explanations with natural language \citep{selvaraju2019taking} or through iterative interaction \citep{schramowski2020making, linardatos2020explainable}.

\textbf{Machine Learning from Explanations.} MLX approaches operate through an augmented loss objective that regularizes the model's dependence on irrelevant features. Traditionally, an input feature attribution method is used to compute the importance of various input features. Previous proposals differed greatly in their choice of the feature attribution method: \citet{rrr} employed gradient-based feature explanation, \citet{rieger2020interpretations} employed contextual decomposition-based feature explanation~\citep{singh2018hierarchical}, \citet{schramowski2020making} employed LIME~\citep{lime}, \citet{shao2021right} employed influence functions~\citep{koh2017understanding}.  \citet{heo_perturbations} championed directly minimizing the function's sensitivity to irrelevant feature perturbations. In \cite{zhang2024targeted} the authors propose to learn feature importance for CNNs and mitigate shortcut reliance via regularization of intermediate activations.  As is evident from the focus of many previous work, the core difficulty lies in approximating the importance of input features.

\textbf{Fragility of Explanations.}  Feature attribution explanations are among the most popular explanation methods and include shapely values \citep{lundberg2017unified}, LIME \citep{lime},  smooth- and integrated-gradients \citep{smilkov2017smoothgrad, sundararajan2017axiomatic}, grad-CAM \citep{selvaraju2020grad}, among others \cite{leofante2025robust}. While each of these methods aim at capturing the per-feature importance of a model decision with respect to a given input, they have recently been found to give opposite feature importance scores for indistinguishable inputs \citep{dombrowski2019explanations}. Manipulating explanations by slightly perturbing input values is similar to crafting adversarial examples \citep{goodfellow2014explaining}, thus adversarial training techniques have been adopted and employed to enforce the robustness of explanations \citep{dombrowski2022towards, r_expl_constr}. However, current MLX approaches do not enforce robustness of explanations and thus use an unreliable optimization signal. Moreover, adopting prior works such as \cite{r_expl_constr} cannot be done directly due to the complications of the MLX setting, that is, the existence of masked features.

\iffalse
The lack of robustness of explanations is a known issue and many methods have been derived to mitigate the lack of robustness \citep{smilkov2017smoothgrad, lime, sundararajan2017axiomatic, dombrowski2022towards, r_expl_constr}. %In \cite{lime}, the authors remove the reliance on the gradient tangent to a particular point and instead use linear regression as an approximate model of local function behaviour. 
In works such as \citep{smilkov2017smoothgrad, sundararajan2017axiomatic} the gradient information is augmented information from surrounding inputs in order to enhance its robustness. %In \citet{dombrowski2022towards}, the authors attempt to modify the learning process such that models have more robust gradients. In \citet{r_expl_constr}, certification, an upper-bound on the adversarial robustness is optimized to the same end. 
In this work, rather than focusing on the robustness of all explanations, we hone in on the MLX setting and seek to enforce that any gradient information used in order to avoid shortcut learning is robust. Thus, the introduction of our novel learning objective means that prior works cannot be directly adopted into our setting \citep{r_expl_constr, dombrowski2022towards}.
\fi

\textbf{Robustness and MLX.} The work most related to ours is \citet{heo_perturbations}, which championed for directly imposing robustness to $\epsilon$-ball perturbations of irrelevant features per example.  The methods presented in our work instead explore robustness of explanations, the signal that is optimized, as opposed to robustness of predictions. Despite their resemblance, {\RF} is more elegant with only one term added to the objective while also being empirically superior. We further provide theoretical insights (\S~\ref{sec:methods}) and empirical validation (\S~\ref{sec:experiments}) on {\RF}'s merit over \citet{heo_perturbations}.

%\st{that robustness of predictions is desirable when learning from explanations and show that doing so leads to state-of-the-art performance.} \st{We provide a detailed theoretical discussion of why the perspective in this paper ought to be more effective () as well as comprehensive experiments to demonstrate the practical effectiveness of our approach (Section).}   

\iffalse
\vpp{Remove this para.}\st{Initial MLX approaches used input gradients as explanations to regularize the models learning objective and provided substantial improvement on a series of baseline tasks} \citep{rrr}. Regularization based on human provided explanations has been explored not just using input gradients but influence functions \citep{shao2021right} or function changes to perturbations  \citep{rieger2020interpretations, moayeri2022comprehensive, heo_perturbations}. 
\fi

%Yet, for complex models recent work has revealed that such explanation methods are fragile to adversarial input and model perturbations \citep{dombrowski2019explanations}. Follow-up works confirmed that without careful regularization model explanations are unreliable sources of feature saliency information \citep{dombrowski2022towards, r_expl_constr}.

% Certification and Adversarial Training
% ------------------------
% Robustifying explanations

% Adversarial training 

% -------------------------
% Distinction between this work and those
\section{Preliminaries} \label{sec:preliminaries}

% Define MLX setting
We denote a machine learning model as a parametric function $f$ with parameters $\bm{\theta} \in \mathbb{R}^m$, which maps from features $\bm{x} \in \mathbb{R}^n$ to labels $\bm{y} \in \mathcal{Y}$. We consider supervised learning in the classification setting with a labeled dataset $\mathcal{D} = \{ (\bm{x}^{(i)}, \bm{y}^{(i)}) \}_{i=1}^{N}$. In the MLX setting, in addition to the feature-label pairs, we have access to human or machine-provided explanations in the form of masks $\bm{m}^{(i)} \in \mathbb{R}^n$, which highlight the important regions or components of the input features relevant for the prediction. These masks provide a form of weak supervision, guiding the model's attention during training. Thus, in MLX, the dataset is extended to include these annotations and is represented as $\mathcal{D}_{\text{MLX}} = \{ (\bm{x}^{(i)}, \bm{y}^{(i)}, \bm{m}^{(i)}) \}_{i=1}^{K}$, where each $\bm{m}^{(i)}$ is the explanation associated with the $i$-th data point. In particular, the mask $\bm{m}^{(i)}$ is a vector with entries in the interval $[0, 1]$, where 1 represents fully irrelevant features and 0 represents fully relevant features.

% RRR
\paragraph{Right for the right reasons.} The seminal approach to MLX, right for the right reasons (RRR or {\RT}) proposes to modify the standard loss by including a regularizing term to suppress gradient values for irrelevant features as shown below. 
{\small
\begin{align} \label{eq:r3}
\mathcal{L}_{\text{RRR}}(\bm{\theta}) =  \underbrace{
     \ell(f^{\bm{\theta}}(\bm{x}^{(i)}), \bm{y}^{(i)})
}_{\text{right answer}} + \lambda \underbrace{
     \| \bm{m}^{(i)} \odot \nabla_{\bm{x}} f^{\bm{\theta}}(\bm{x}^{(i)}) \|_2^2
}_{\text{right reason}}+ \beta \underbrace{
    \| \bm{\theta} \|_2^2
}_{\text{regularize}} 
\end{align}
}
In the above expression, $\odot$ denotes the Hadamard product. The loss function $\mathcal{L}_{\text{RRR}}$ involves three components. The first, labeled ``right answer,'' is the standard loss used to train the model. The second, labeled ``right reason'', penalizes the magnitude of the gradient along the feature dimensions determined to be irrelevant, hence the element-wise multiplication with the mask $\bm{m}^{(i)}$. The final term, labeled ``regularize,'' is the standard weight decay term that has a smoothing effect on the final model. $\lambda$ and $\beta$ are hyperparameters that control the relative contributions of the gradient penalty and the weight decay term, respectively. 

%\todo{This section needs to be rewritten to make the intent clear in the beginning.}
\paragraph{Input-gradient robustness.}  The critical term in {\RT} is the ``right reason'' component: $\| \bm{m}^{(i)} \odot \nabla_{\bm{x}} f^{\bm{\theta}}(\bm{x}^{(i)}) \|_2^2$, which depends on the input gradient $\nabla_{\bm{x}} f^{\bm{\theta}}(\bm{x}^{(i)})$ to ensure that the model does not leverage information from irrelevant features to make predictions. However, the ineffectiveness of gradient explanations for highly non-linear models is well-documented \citep{heo_perturbations, r_expl_constr}, owing to their poor robustness to perturbations of spurious features. \citep{r_expl_constr} employed a variant of Lipschitz smoothness to quantify the robustness of an explanation method through input gradient \textit{fragility}.
%\todo{is ``input gradient fragility'' the right term? It does not sound right.}.
The fragility ($\delta$) of a function with parameters $\bm{\theta}$ in an $\epsilon$-ball around $x$ is defined as:
\begin{align} \label{eq:delta-input-rob}
    \forall \bm{x}' \in \mathcal{B}_{\epsilon}(\bm{x}), || \nabla_{\bm{x}} f^{\bm{\theta}}(\bm{x}) - \nabla_{\bm{x}'} f^{\bm{\theta}}(\bm{x}')||  \leq \delta
\end{align}
We can understand this intuitively by observing that as $\delta \to 0$ we require that the gradients become identical for all inputs in the ball and therefore the model is linear inside of $B_{\epsilon}(\bm{x})$. On the other hand, when $\delta \to \infty$ we can interpret this as the model becoming more and more non-linear. Unfortunately, even for small convolutional neural networks trained without regularization, typical values of $\delta$ will have extreme magnitudes (Table \ref{tab:relative_delta}), denoting that even an imperceptible perturbation ($\leq \epsilon$) in the irrelevant features can drastically alter the feature importance, thus making it an unreliable feature to optimize. % In practice, this means that though the right reasons constraint penalizes the input gradient, $\nabla_{\bm{x}} f^{\bm{\theta}}(\bm{x})$, it is a very poor reflection of the models true behavior around $x$.

\section{{\RF} : Robustly Right for the Right Reasons}\label{sec:methods}

The primary motivation for our methodology is the observation that when the input gradient is non-robust the {\RT} regularizer is optimizing a poor signal \citep{dombrowski2019explanations, dombrowski2022towards} and can be substantially improved if one simultaneously minimizes the {\RT} loss and $\delta$. To do so, we propose an adversarial learning objective that we call \textit{robustly right for the right reasons} or {\RF}. %\todo{Describe the objective in literal before equations.}Our objective can be stated as:
% \iffalse
% \begin{align}
% \mathcal{L}_{\text{R}^4}(\bm{\theta}) = \underbrace{
%      \ell(f^{\bm{\theta}}(\bm{x}^{(i)}), \bm{y}^{(i)})
% }_{\text{right answer}} +
% & \lambda \underbrace{
%      \max_{ \bm{\xi}: \|\bm{\xi}\| < \epsilon}\| \bm{m}^{(i)} \odot \nabla_{\bm{x}} f^{\bm{\theta}}(\bm{x}^{(i)} + \bm{m}^{(i)}\odot \bm{\xi}) \|_2
% }_{\text{robustly right reason}}\nonumber
% \end{align}
% \fi
\begin{align} \label{eq:R4}
\mathcal{L}_{\text{R}^4}(\bm{\theta}) =  
     \underbrace{\ell(f^{\bm{\theta}}(\bm{x}^{(i)}), \bm{y}^{(i)})}_{\text{right answer}} + \lambda \underbrace{
     \max_{ \bm{\xi}: \|\bm{\xi}\| < \epsilon}\| \bm{m}^{(i)} \odot \nabla_{\bm{x}} f^{\bm{\theta}}(\bm{x}^{(i)} + \bm{m}^{(i)}\odot \bm{\xi}) \|_2
}_{\text{robustly right reason}}
\end{align}
The inner optimization (the maximum) is the adversarial term that simultaneously enforces that the gradient magnitude of irrelevant features is small and remains small for any small change to the irrelevant features. Unfortunately, computing the solution to this maximization problem is intractable for complex models and even for fully-connected networks is NP-Complete \citep{katz2017reluplex}, thus approximate solutions must be proposed in order to use this objective in practice.  In the following sections, we present the theoretical motivation for {\RF} in \S\ref{subsec:theory} then provide a series of approximations for intractable inner optimization \S\ref{subsec:sample} - \S\ref{subsec:certify}. The extension of the proposed approximations to language modeling tasks can be found in Appendix \ref{app_sec:extension_text}.

\iffalse
In the following sections, we first discuss how our regularizers has theoretical interpretations (\S\ref{subsec:theory}). We then discuss how, despite being computationally intractable, we can use a variety of approaches from different fields to efficiently compute our regularizing term. In \S\ref{subsec:sample}, we consider a statistical approach (Rand-{\RF}) to upper-bounding the term which can scale to even the most advanced neural network architectures. In \S\ref{subsec:attack}, we utilize advances in adversarial robustness to propose a first-order optimization approach (Adv-{\RF}) which lower-bounds the proposed regularizing term and is a stronger regularizer than (Rand-{\RF}) at the cost of scalability. Finally, in \S\ref{subsec:certify}, we leverage relaxation of the non-convex optimization in order to upper-bound the regularizing term (Cert-{\RF}) therefore providing the strongest regularization suitable for safety-critical applications, but at a substantial scalability cost.
\fi
%develop novel convex relaxations to efficiently compute upper-bounds on the our inner optimization, labeled ``robustly right reason'' (\S\ref{subsec:certify}). Due to the fact that upper-bounds rarely scale to even medium-sized machine learning models, we additionally provide efficient lower-bounds on the inner optimization . %While being much simpler than the current state-of-the-art, we will discuss how it is theoretically principled (\S\ref{subsec:theory}), 

\subsection{{\RF} $\,$Theoretical Intuition}\label{subsec:theory}
%\todo[inline]{If we have time, we should consider the following: (a) define instead $\Delta f^{\bm{\theta}}_{m,2}(\bm{x})$ as sum of only function change and gradient change in Eq (3). This makes eq. (4) look better. (b) More importantly, we should use $\vec{x}, \vec{m}$ notations for x, m and any other vectors as well.} % Used \bm to denote vectors as it is more common.

We begin by describing the theoretical intuition of our learning objective, showing its benefits relative to prior works.  We denote the set of all perturbations of an input $x'$ in the features determined by the mask $m$ with magnitude at most $\epsilon$ to be $\mathcal{B}^{m}_{\epsilon}(\bm{x}')$ and highlight that an ideal model is insensitive to changes in the input in this set for moderate to large values of $\epsilon$. To think about the behavior of the model in this set we employ a second-order Taylor expansion around $x'$  in the direction determined by $m$:
\begin{align} \label{eq:taylorapprox}
f_{\bm{m},2}^{\bm{\theta}}(\bm{x}) \approx f^{\bm{\theta}}(\bm{x}') + \underbrace{\nabla_{\bm{x}} f^{\bm{\theta}}(\bm{x}')^\top (\bm{m} \odot (\bm{x} - \bm{x}'))}_{\text{function change}} + \underbrace{\frac{1}{2} (\bm{m} \odot (\bm{x} - \bm{x}'))^\top H_{x'}(f^{\bm{\theta}}) (\bm{m} \odot (\bm{x} - \bm{x}'))}_{\text{gradient change}}, 
\end{align}
where the subscript $2$ in $f^{\theta}_{\bm{m},2}$ denotes the order of the approximation, and where $H_{x'}(f^{\theta})$ denotes the Hessian of the function induced by a model with parameters $\theta$, computed with respect to the input and evaluated at $x'$. We define the function $f_{\bm{1-m},2}^{\bm{\theta}}(\bm{x})$ correspondingly to denote the function's behavior when perturbing features not in $m$. The complete approximation of the function can then be expressed as: 
\begin{align}
    f^{\bm{\theta}}(\bm{x}) \approx \underbrace{f_{\bm{m},2}^{\bm{\theta}}(\bm{x})}_{\text{sens. to shortcuts}} + \underbrace{f_{\bm{1-m},2}^{\bm{\theta}}(\bm{x})}_{\text{sens. to core}} - f^{\bm{\theta}}(\bm{x}').
\end{align}
%\todo{The $f^{\bm{\theta}}(\bm{x}')$ in eq (4) is potentially wrong/typo.}
The primary goal of the {\RF} algorithm can be stated as minimizing only the contribution of $f_{\bm{m},2}^{\bm{\theta}}(\bm{x})$. To observe this, we assume a bound on the gradient magnitude for any point in $\mathcal{B}^{m}_{\epsilon}(\bm{x}')$: 
\begin{align} \label{eq:norm_bound}
    \forall \bm{x}^\star \in \mathcal{B}^{m}_{\epsilon}(\bm{x}'), \|\bm{m} \odot \nabla_{\bm{x}} f^{\bm{\theta}}(\bm{x}^\star)\| \leq \delta^\star
\end{align}
By rearranging the Taylor expansion  $f_{\bm{m},2}^{\bm{\theta}}(\bm{x})$, applying the norm on both sides and using the triangle inequality we can use Equation~\ref{eq:norm_bound} to obtain:
\begin{align*}
\|f_{\bm{m},2}^{\bm{\theta}}(\bm{x}) - f_{\bm{m},2}^{\bm{\theta}}(\bm{x}')\| \lessapprox \delta^\star \|\bm{x} - \bm{x}'\| + \frac{1}{2}\|\bm{x} - \bm{x}'\|\|H_{\bm{x}'}(f^{\bm{\theta}})\| \|\bm{x} - \bm{x}'\|
\end{align*}
%\todo{m should not be dropped from the last expression.}
%
We have by definition that $\bm{x'} \in \mathcal{B}_{\epsilon}^m(\bm{x})$, which implies $\|\bm{x} - \bm{x}'\| \leq \|\bm{m} \odot \epsilon\| \leq \|\epsilon\|$, since $m \in [0,1]$.
In addition, the norm of the input gradient at $x'$ is bounded by $\delta^\star$ in an $\epsilon$-ball then the norm of the Hessian at $x'$ is bounded by $\delta^\star$. Additionally, using Taylor's Theorem with Lagrange remainder, we have that the bound in Equation~\ref{eq:norm_bound} allows us to make the above inequality strict regardless of the order of the approximation. Therefore, we have: 
\begin{align}\label{eq:strictbound}
    \|f_{m}^{\bm{\theta}}(\bm{x}) - f_{m}^{\bm{\theta}}(\bm{x}')\| \leq \delta^\star \|\epsilon\|(1 + \frac{1}{2}\|\epsilon\|)
\end{align}
where the contribution from all higher-order terms is captured by $\delta^\star\| \epsilon\|^2/2$. Full proof can be found in Appendix \ref{app:proof}. The primary intuition for {\RF} is that $\epsilon$ is a moderate to large value (as it acts only on the masked values $m$) and   $\delta^\star$ can be extremely large for complex models \S(\ref{sec:preliminaries}). Since {\RF} directly minimizes the bound in Equation~\ref{eq:norm_bound}, we can see how we effectively minimize the sensitivity of the function to perturbations of the masked features. We highlight that though we suppress the change in $f^{\bm{\theta}}_{\bm{m}}$ we leave the function $f^{\bm{\theta}}_{\bm{1-m}}$  unregularized thus encouraging models to learn from core features while suppressing spurious features. This differs from the mechanisms employed in prior works \citep{rrr, heo_perturbations}, which require weight regularization to achieve SOTA results. %Unfortunately, computing $\delta^\star$ is computationally intractable and in subsequent sections we will provide three approximations. Prior to detailing our approximate computations of $\delta^\star$, we discuss related methods in the context of the above theoretical model. 

\paragraph{Shortcomings of {\RT}~\citep{rrr}.} \citet{heo_perturbations} have shown that {\RT} is suboptimal because it requires heavy parameter regularization (third term of Equation \ref{eq:r3}) to minimize the contribution of spurious features. Additionally, suppressing the parameter norm in {\RT} also has the effect of minimizing $\delta$. However, since parameter smoothing is agnostic to feature saliency, regularizing the model, while minimizing the input fragility and contribution of spurious features, will also significantly hamper learning from core features.
% \vpp{However, smoothing the function everywhere, while minimizes $\delta$, will overly regularize the model hampering also the learning of core features.} \st{However, weight decay regularization has no way to distinguish the contribution of $f_{m}^{\bm{\theta}}(\bm{x})$ and $f_{1-m}^{\bm{\theta}}(\bm{x})$. Thus the use of both spurious and core features are equally discouraged. This has drastic effects on the performance of models where a complex function is needed to capture the relationship between core features and labels.}

\paragraph{Shortcomings of IBP-Ex + {\RT}~\citep{heo_perturbations}.} 
\iffalse
\textbf{Shortcomings of IBP-Ex}
Though distinct, our objective in~\eqref{eq:R4} is similar to that of \citet{heo_perturbations}. They both contain an inner maximization in the $\epsilon$-ball around the masked region of the training inputs. While their proposed method (IBP-Ex) minimizes the function deviation, we minimize the norm of gradient in the $\epsilon$-neighbourhood. Although both the methods aim to make the model robust to perturbations of the mask feature, we show that our method has better tolerance to error in solving the inner maximization. We demonstrate the same using simple setting in Appendix~\ref{app:IBPComp}.  
 \fi
The approach proposed by \citet{heo_perturbations},
IBP-Ex+{\RT}, employs adversarial methods to minimize the change in output of the model, which serves to partially suppress use of spurious features (the $\delta^\star ||\epsilon||$ in our bound). 
However, they rely on {\RT} to minimize the \textit{gradient change} contribution. As we have established this overlooks the higher-order terms that contribute the practically non-negligible term: $\delta^\star\| \epsilon\|^2/2$ and therefore leads to a sub-optimal mitigation of shortcut learning. In addition to our argument here, we report theoretical analysis in the same spirit as \citet{heo_perturbations} to demonstrate the advantage of {\RF} over IBP-Ex + {\RT} in Appendix~\ref{app:IBPComp}.

\subsection{Inner Optimization Approximations}

We propose three approximation strategies for tackling the inner maximization problem in \eqref{eq:R4}: \textbf{Rand-\RF}, \textbf{Adv-\RF} and \textbf{Cert-\RF}. The first leverages gradient sampling in the masked region under perturbations of the annotated features, offering scalability to large neural network architectures. The second employs a projected gradient descent–based search, providing a trade-off between scalability and tightness of the approximation. The third utilizes interval bound propagation (IBP) \citep{gowal2018effectiveness} to obtain worst-case predictions under manipulations of the shortcut features, yielding the tightest bounds, but suffering from scalability limitations due to overapproximation errors, especially for large networks, as demonstrated in our experiments. For our discussion of computational runtime, we consider a test set with $N$ points, and denote the time required for a standard forward pass as $T_f$, and for a backward pass as $T_b$, values which are dependent on the type of model architecture used. 

\subsubsection{Rand-{\RF}: A statistical approach}\label{subsec:sample}

The first approach we present to approximate the intractable inner maximization of {\RF} is a scalable statistical approach based on sampling. The intuition for this approach is that our optimization is over all possible perturbations of the input in the masked region. Thus, a loose but efficient approximation of the maximum is to randomly sample perturbations and take the worst sample as the estimate of the maximum. Where we define $\mathcal{U}(x, m, \epsilon)$ as the uniform distribution over perturbations of $x$ in the region determined by $m$ with magnitude at most $\epsilon$, the approach that we will call Rand-{\RF} is given by: 
\begin{align*}
    r_{(i+1)} = \max\{r_{(i)}, ||\nabla_{\bm{x}} f^{\bm{\theta}}(\bm{x}) - \nabla_{\bm{x}_{(i)}} f^{\bm{\theta}}(\bm{x}_{(i)})|| \}, \:\: \bm{x_{(i)}} \sim \mathcal{U}(\bm{x}, \bm{m}, \epsilon),
\end{align*}
where $r_0 = 0$. This approach is run for a fixed number of samples $k$ and then the value $r_k$ is taken to be the approximate solution to the {\RF} optimization problem. Naturally, the probability of sampling the optimal solution is 0, however, as $k \to \infty$ we expect to get close to the optimal solution. In practice, this approach offers a good balance between computational complexity and result quality; its strength lies not in its tightness but in its ease of implementation, scalability, and adaptability. While the method can become time-consuming for large sample sizes, fewer samples are often sufficient for large datasets, and the approach lends itself well to parallelization. In terms of runtime, we can notice that for every data point, a perturbation sample requires a forward and backward pass in order to obtain the gradient. Therefore, the total runtime is $N \times k \times (T_f + T_b)$. 

\subsubsection{Adv-{\RF}: An adversarial attack approach}\label{subsec:attack}

%\todo{First, present the approach and then talk about differences.}
The learning objective that we propose in this work closely resembles that of adversarial training for enhancing robustness machine learning models. There are, however, two important distinctions that make traditional adversarial training approaches incompatible with our objective. Firstly, in standard adversarial training \textit{all} features can be perturbed while in our objective only features in the masked region $\bm{m}$ can be perturbed. To accommodate this change, one can perform projected gradient descent (PGD) with the projection being onto the set $B^{m}_{\epsilon}(\bm{x})$. Additionally, and more importantly, standard adversarial attacks target changing the model's prediction/output. In contrast, our objective is to penalize the maximum change in the input gradients. This is similar to what is proposed in the robust explanations literature \citep{dombrowski2019explanations}; however, that line of work does not consider the constraints imposed by the MLX setting. Considering all the modifications suggested by our approach as a whole, we have that a locally optimal solution to the optimization problem in {\RF} can be found by $k$ iterations of the following scheme, letting $\bm{x}^{\text{adv}}_{(0)} = \bm{x}$: 
\begin{align*}\small
    \bm{x}_{(i+1)} &= \bm{x}^{\text{adv}}_{(i)} + \alpha \text{sgn}\left(\nabla_{\bm{x}} |\nabla_{\bm{x}} f^{\bm{\theta}}(\bm{x}) - \nabla_{\bm{x}} f^{\bm{\theta}}(\bm{x}^{\text{adv}}_{(i)})|\right)\\
    \bm{x}^{\text{adv}}_{(i+1)} &= \text{Proj}\left(\bm{x}_{(i+1)}, B^{\bm{m}}_{\epsilon}(\bm{x})\right),
\end{align*}
where $\text{sgn}(\cdot)$ is the sign function, $\text{Proj}(\cdot,\cdot)$ is the projection operator and $\alpha$ is a hyperparameter representing the step size of a general PGD attack \cite{madry2017pgd}, controlling the magnitude of each perturbation step. The result $\bm{x}^{\text{adv}}_{(k)}$ is then an input point which approximately maximizes the inner optimization of {\RF}. Unfortunately, owing to the non-convexity of the inner optimization {\RF}, this approach will always under-estimate the true solution to the optimization problem. 

The benefits of this approach to the {\RF} learning objective that adversarial attacks are well-studied, and thus there are many known heuristics for jointly improving model performance and adversarial performance. Unfortunately, it is well-known that models with large capacity tend to learn the patterns within the attacks themselves \citep{madry2017pgd, dongexploring}. For safety critical domains such as medical image classification, overfitting to specific attack types may not be an acceptable approximation \citep{tramer2020adaptive}. The runtime of this method grows as the number of iterations increases, requiring two forward and backward passes for each PGD iteration. As such, the total runtime is $N \times 2 \times k \times (T_f + T_b)$.

\subsubsection{Cert-{\RF}: Convex relaxation approach}\label{subsec:certify}

Both Rand-{\RF} and Adv-{\RF} are efficient, but only provide a lower bound on the maximization problem of interest. Although a lower bound on {\RF} may suffice to avoid shortcut learning, in safety-critical domains such as autonomous navigation and medical imaging, approximately avoiding shortcuts may not be sufficient to prevent adversarial behavior, such as relying on unknown spurious features or reacting unpredictably to imperceptible input changes. Thus, it is critical to provide practitioners in these domains with tools that offer \textit{worst-case guarantees} against \textit{any} type of adversarial behavior. In this section, we present Cert-{\RF} which leverages advances in convex relaxation for neural networks to upper-bound the maximization in the {\RF} objective, which corresponds to the bound in Equation~\ref{eq:norm_bound} in our theoretical discussion. We emphasize that the fact that we compute an upper bound in this case enables the strict inequality in Equation~\eqref{eq:strictbound}. Here, we describe how one can use interval bound propagation to over-approximate the {\RF} learning objective.

Though the {\RF} objective can be applied to any differentiable model, we focus on neural networks. We begin by defining a neural network model $f^{\bm{\theta}}: \mathbb{R}^{n_\text{in}} \rightarrow \mathbb{R}^{n_\text{out}}$ with $K$ layers and parameters $\bm{\theta}=\left\{(\bm{W}^{(i)}, \bm{b}^{(i)})\right\}_{i=1}^K$ as:
\begin{align*}
    \hat{\bm{z}}^{(k)} &= \bm{W}^{(k)} z^{(k-1)} + \bm{b}^{(k)},\\ \bm{z}^{(k)} &= \sigma \left(\hat{\bm{z}}^{(k)}\right)
\end{align*}
where $\bm{z}^{(0)} = \bm{x}$, $f^{\bm{\theta}} (\bm{x}) = \hat{\bm{z}}^{(K)}$, and $\sigma$ is the activation function, which we assume is monotonic. We also state the backwards pass here starting with $\bm{d}^{(L)} = \nabla_{\hat{\bm{z}}^{(L)}} f^{\bm{\theta}}(\bm{x}),$ we have that backwards pass is given by: 
\begin{align*}
    \bm{d}^{(k-1)} = \left(\bm{W}^{(k)}\right)^\top \bm{d}^{(k)} \odot \sigma'\left(\hat{\bm{z}}^{(k-1)}\right)
\end{align*}
where we are interested in $\bm{d}^{(0)} = \nabla_{\bm{x}} f^{\bm{\theta}}(\bm{x})$. The important observation for the above equations (both forwards and backwards) is that they require only matrix multiplication, addition, and the application of a monotonic non-linearity. As such, we can efficiently employ interval arithmetic to compute all possible values of $\delta^{(0)}$ by first casting the domain of the optimization problem as an interval: $[\bm{x}^{L}, \bm{x}^{U}]$ where $\bm{x}^{L} = \bm{x} - \epsilon \bm{m}$ and $\bm{x}^{U} = \bm{x} + \epsilon \bm{m}$ for some positive constant $\epsilon$. 
\iffalse
As established in \cite{r_expl_constr}, the problem of passing the input through both the forward and backwards pass of the neural network requires us to upper- and lower-bound the product of two matrices coming from an interval. To make this more explicit, we recall the primary Theorem that enables this pass:
%
\begin{definition}[Interval Matrix Multiplication]\label{def:IBP} Given element-wise intervals over matrices $[A_L, A_U]$ where $A_L, A_U \in  \mathbb{R}^{n \times m}$ and $[B_L, B_U]$ where $B_L, B_U \in \mathbb{R}^{m \times k}$, define the matrices $A_\mu = (A_U + A_L)/2$ and $A_r = (A_U - A_L)/2$. Allow $B_\mu$ and $B_r$ to be defined analogously, then computing using Rump's algorithm~\citep{rump1999fast},
\begin{align*}
    C_L = A_\mu B_\mu - |A_\mu| B_r - A_r |B_\mu| - A_r B_r \\
    C_U = A_\mu B_\mu + |A_\mu| B_r + A_r |B_\mu| + A_r B_r,
\end{align*}
we have that $C_{L i,j} \leq \left[A'B'\right]_{i,j} \leq C_{U i,j}\ \forall A' \in [A_L, A_U], \ B' \in [B_L, B_U]$. 
\citet{nguyen2012efficient} showed that the above bounds have a worst-case overestimation factor of 1.5. 
\end{definition}
%
Though this interval matrix multiplication has recently found use in several different deep learning applications, in this work we use it to novel ends (toward upper-bounding the {\RF} loss).
\fi
%%%%%%%%%%%%%%%%%%%%%%%%%%%%%%%%%%%%%%%%%%%
In \citet{r_expl_constr} a procedure using interval bounds is given that takes such an interval over inputs and computes an interval over gradients $[\bm{\eta}^{L}, \bm{\eta}^{U}]$ such that we have the following property: 
\begin{align} \label{eq:input_grad_prop}
    \forall \bm{x}' \in [\bm{x} - \epsilon \bm{m}, \bm{x} + \epsilon \bm{m}],  \: \nabla_{\bm{x}'} f^{\bm{\theta}}(\bm{x}') \in [\bm{\eta}^{L}, \bm{\eta}^{U}]. 
\end{align}
We provide an account of how this propagation proceeded in Appendix~\ref{app:ibp}. To demonstrate the use of interval bound propagation to the {\RF} objective, we first compute the interval over input gradients, $[\bm{\eta}^{L}, \bm{\eta}^{U}]$ and now show how we can compute, in closed form, the solution to a maximum upper-bounding the inner maximization of {\RF}: 
\begin{align*}
    \bm{\eta}^\star_i = \begin{cases} 
      \bm{\eta}^{L}_i & \text{ if } |\nabla_{\bm{x}} f^{\bm{\theta}}(\bm{x})_i - \bm{\eta}^{L}_i| > |\nabla_{\bm{x}} f^{\bm{\theta}}(\bm{x})_i - \bm{\eta}^{U}_i| \\
      \bm{\eta}^{U}_i & \text{ otherwise }
   \end{cases}
\end{align*}
Finally, the value $|\bm{\eta}^\star - \nabla_{\bm{x}} f^{\bm{\theta}}(\bm{x})|$ is an upper-bound on the maximum of the {\RF} objective. Unfortunately, due to the fact that  $[\bm{\eta}^{L}, \bm{\eta}^{U}]$ computed using interval bound will always be loose, thus the term $|\bm{\eta}^\star - \nabla_{\bm{x}} f^{\bm{\theta}}(\bm{x})|$ will always be \textit{larger} than the true maximum.  We note that if the upper-bound computed by Cert-{\RF} is sufficiently small, then by the argument provided in our theoretical discussion we have that neither the models prediction nor its input gradient changes in response to changes in the masked region. The computational requirements of this method are the smallest out of all our proposed approximations, since for every point only two forward and backward passes are necessary. Therefore, the total runtime is $N \times 2 \times (T_f + T_b)$.
\section{Experiments} \label{sec:experiments}
\begin{table*}[t!]
\begin{center}
\caption{Performance Comparison of MLX Methods. The best results, along with those not statistically distinguishable from the best, are highlighted in bold. Values following the $\pm$ symbol indicate standard deviations.}
\vskip 0.05in

% Decoy Datasets
\resizebox{1.0\textwidth}{!}{
\begin{tabular}{|c|c|c|c|c|c|c|}
\cline{2-7}
\multicolumn{1}{c|}{\textbf{ }} & \multicolumn{6}{|c|}{\textbf{Synthetic Datasets}} \\
\cline{2-7}
\multicolumn{1}{c|}{\textbf{ }} & \multicolumn{2}{|c|}{Decoy MNIST} & \multicolumn{2}{|c|}{Decoy DERM} & \multicolumn{2}{|c|}{Decoy IMDB} \\
\hline
\shortstack{\textbf{Learning} \\ \textbf{Objective} $\downarrow$} &
\shortstack{\textbf{Avg Acc}} & \shortstack{\textbf{Wg Acc}} & 
\shortstack{\textbf{Avg Acc}} & \shortstack{\textbf{Wg Acc}} &
\shortstack{\textbf{Avg Acc}} & \shortstack{\textbf{Wg Acc}} \\
\hline
ERM & 63.89 {\small $\pm$ 1.2} & 22.39 {\small $\pm$ 1.7} & 50.32 {\small $\pm$ 0.0} & 48.39 {\small $\pm$ 0.0} & 50.02 {\small $\pm$ 0.0} & 0.1 {\small $\pm$ 0.0} \\
\hline
{\RT} & 92.22 {\small $\pm$ 0.7} & 85.43 {\small $\pm$ 5.9} & 68.44 {\small $\pm$ 1.7} & 68.32 {\small $\pm$ 8.8} & 71.43 {\small $\pm$ 2.27} & 64.72 {\small $\pm$ 19.66} \\
\hline
Smooth-{\RT} & 93.31 {\small $\pm$ 0.19} & 85.28 {\small $\pm$ 1.9} & 70.95 {\small $\pm$ 2.4} & 70.78 {\small $\pm$ 7.1} & 73.96 {\small $\pm$ 3.76} & 68.61 {\small $\pm$ 25.25} \\
\hline
IBP-Ex & 89.69 {\small $\pm$ 0.5} & 83.65 {\small $\pm$ 4.2} & 68.85 {\small $\pm$ 1.1} & 67.47 {\small $\pm$ 9.6} & - & - \\
\hline
IBP-Ex + {\RT} & 93.07 {\small $\pm$ 0.1} & 88.05 {\small $\pm$ 1.9} & 70.15 {\small $\pm$ 0.9} & 70.13 {\small $\pm$ 5.6} & - & - \\
\hline
Rand-{\RF} & 93.29 {\small $\pm$ 0.17} & 88.92 {\small $\pm$ 1.4} & 71.13 {\small $\pm$ 1.2} & 69.47 {\small $\pm$ 0.4} & 78.17 {\small $\pm$ 3.61} & 73.97 {\small $\pm$ 5.36} \\
\hline
Adv-{\RF} & 93.47 {\small $\pm$ 0.2} & 89.51 {\small $\pm$ 1.2} & 72.26 {\small $\pm$ 0.6} & 70.47 {\small $\pm$ 1.3} & \textbf{86.8 {\small $\pm$ 2.68}} & \textbf{83.02 {\small $\pm$ 12.26}} \\
\hline
Cert-{\RF} & \textbf{97.02 {\small $\pm$ 0.09}} & \textbf{94.70 {\small $\pm$ 0.4}} & \textbf{78.24 {\small $\pm$ 0.6}} & \textbf{78.11 {\small $\pm$ 4.9}} & - & - \\
\hline
\end{tabular}
}
\vskip 0.1in

% Non-Decoy Datasets
\resizebox{1.0\textwidth}{!}{
\begin{tabular}{|c|c|c|c|c|c|c|}
\cline{2-7}
\multicolumn{1}{c|}{\textbf{ }} & \multicolumn{6}{|c|}{\textbf{Real World Datasets}} \\
\cline{2-7}
\multicolumn{1}{c|}{\textbf{ }} & \multicolumn{2}{|c|}{ISIC} & \multicolumn{2}{|c|}{Plant} & \multicolumn{2}{|c|}{Salient Imagenet} \\
\hline
\shortstack{\textbf{Learning} \\ \textbf{Objective} $\downarrow$} &
\shortstack{\textbf{Avg Acc}} & \shortstack{\textbf{Wg Acc}} & 
\shortstack{\textbf{Avg Acc}} & \shortstack{\textbf{Wg Acc}} &
\shortstack{\textbf{Avg Acc}} & \shortstack{\textbf{RCS}} \\
\hline
ERM & 82.02 {\small $\pm$ 0.05} & 51.66 {\small $\pm$ 3.1} & 70.83 {\small $\pm$ 1.9} & 52.53 {\small $\pm$ 10.0} & 99.10 & 48.48 \\
\hline
{\RT} & 72.53 {\small $\pm$ 0.9} & 63.00 {\small $\pm$ 4.06} & 74.53 {\small $\pm$ 6.2} & 64.80 {\small $\pm$ 20.6} & 95.53 & 51.56 \\
\hline
Smooth-{\RT} & 86.17 {\small $\pm$ 1.4} & 64.59 {\small $\pm$ 10.5} & 69.83 {\small $\pm$ 9.9} & 61.86 {\small $\pm$ 24.53} & 96.42 & 53.42 \\
\hline
IBP-Ex & 82.81 {\small $\pm$ 1.7} & 69.62 {\small $\pm$ 11.7} & 75.82 {\small $\pm$ 6.4} & 72.44 {\small $\pm$ 21.2} & - & - \\
\hline
IBP-Ex + {\RT} & 83.33 {\small $\pm$ 1.1} & 70.91 {\small $\pm$ 2.1} & 76.48 {\small $\pm$ 1.6} & 75.97 {\small $\pm$ 5.08} & - & - \\
\hline
Rand-{\RF} & \textbf{87.28 {\small $\pm$ 1.8}} & \textbf{78.15 {\small $\pm$ 4.5}} & 67.23 {\small $\pm$ 8.2} & 66.45 {\small $\pm$ 29.19} & 98.21 & 62.43 \\
\hline
Adv-{\RF} & 85.65 {\small $\pm$ 1.2} & 66.65 {\small $\pm$ 1.9} & 79.58 {\small $\pm$ 2.4} & 79.20 {\small $\pm$ 5.7} & \textbf{99.10} & \textbf{68.51} \\
\hline
Cert-{\RF} & \textbf{85.39 {\small $\pm$ 1.3}} & \textbf{71.52 {\small $\pm$ 9.2}} & \textbf{82.72 {\small $\pm$ 2.3}} & \textbf{82.25 {\small $\pm$ 1.08}} & - & - \\
\hline
\end{tabular}
}
\label{tab:perf_r4}
\end{center}
\end{table*}

%\todo[inline]{Comment on the real-world datasets, how many examples, and why are complex. Also comment on the base model used for each dataset.}
We conduct experiments on six datasets, comparing each {\RF} variant against baselines, including state-of-the-art MLX methods. Appendix \ref{app_sec:benchmark_dsets} contains a detailed description of the dataset characteristics and feature annotations, while Appendix \ref{app_sec:model_archs} outlines the model architectures used in our experiments. The datasets include three synthetic ones and three real-world datasets: \textbf{Decoy MNIST} \citep{rrr}, \textbf{Decoy DERM} (a variant of DermMNIST \citep{medmnistv2} created similarly to Decoy MNIST), \textbf{Decoy IMDB} (a text dataset \citep{imdb_dset} created by mimicking Decoy MNIST in a discrete space), \textbf{ISIC} \citep{codella2019skin}, \textbf{Plant Phenotyping}, and \textbf{Salient ImageNet} \citep{singla2022salientimagenet}. 
%Our experimental section first describe the baseline algorithms and performance metrics used. We then discuss the results we achieve and subsequently perform extensive ablations. 
Additional results can be found in Appendix \ref{app_sec:weight_decay}, \ref{app_sec:model_size_ablations} and \ref{app_sec:mask_corr_extra}.% and show that {\RF} achieves state-of-the-art performance across all datasets and metrics. Systematic ablation studies further explore the reasons for this performance and potential failure modes, with additional results in Appendices \ref{app_sec:weight_decay} and \ref{app_sec:model_size_ablations}. Dataset characteristics and spurious feature annotations are detailed in Appendix \ref{app_sec:benchmark_dsets}. 

\subsection{Baseline Algorithms}

\textbf{Empirical Risk Minimization (ERM).} We consider the standard empirical risk minimization (ERM) as our simplest benchmark. Without any regularization, ERM simply minimizes the categorical-cross entropy loss using the Adam optimizer. This is equivalent to simply using the term represented by the ``\textit{right answer}'' term in equation \ref{eq:R4}.

\textbf{Regularization-based methods.} We describe the traditional approaches to MLX as \textit{regularization-based} and take the primary benchmark for this category to be {\RT} as described in \citet{rrr} and discussed in Section \ref{sec:preliminaries}. In order to understand if simply adopting a different, more robust, explanation method is sufficient to overcome the lack of robustness of input-gradient information we additionally employ Smooth-{\RT}, which adopts the same loss as {\RT}, but employs the smoothed gradient of \citet{smilkov2017smoothgrad} as the explanation method that identifies reliance on shortcut features.

\textbf{Robustness-based methods.} To benchmark against the recently proposed robustness-based methods, we employ IBP-Ex \citep{heo_perturbations} and use IBP-Ex+{\RT} which was found to have state-of-the-art performance across all datasets (we note their paper refers to the latter as IBP-Ex+Grad-Reg). 

\subsection{Performance Metrics}  

Our objective is to suppress the influence of irrelevant features while preserving overall predictive performance. To evaluate this, we report two complementary metrics: worst-group accuracy (\textbf{Wg Acc}), which reflects the effectiveness of irrelevant feature suppression, and macro-averaged group accuracy (\textbf{Avg Acc}), which captures overall model performance across groups. These metrics are computed over predefined groups specific to each dataset and have also been used by the previous SOTA method of \citet{heo_perturbations}. For Salient-ImageNet, we utilize the relative core sensitivity (RCS) \citep{singla2022salientimagenet} which measures the function's sensitivity to perturbations of the core features and compares it to the function's sensitivity to perturbations of masked features. In order to understand if our hypothesis about the relationship between robustness of input gradients and effectively regularizing shortcut features is correct, we also consider the certified fragility of input gradients in the masked regions which we denote with $\kappa$ \citep{r_expl_constr}; this is the average difference between certified upper and lower-bounds of the input gradient of all masked features.

\subsection{Avoiding Shortcut Learning with MLX}

Table \ref{tab:perf_r4} summarizes the performance of various state-of-the-art (SOTA) gradient regularization techniques across our six benchmark datasets. We report average accuracy (Avg Acc), worst-group accuracy (Wg Acc), as well as their respective standard deviations measured across runs. For SalientImageNet and DecoyIMDB, we omit results for Cert-\RF, IBP-Ex, and IBP-Ex+\RF, as these methods rely on IBP, which exhibits limited scalability to large-scale architectures such as ResNet-18 and BERT. The overall results indicate a clear advantage of robustness-based methods, particularly those employing interval bound propagation, in mitigating shortcut learning across diverse datasets. The Cert-{\RF} model consistently ranks among the top-performing methods which achieve accuracy better than previous SOTA: IBP-Ex+{\RT}.

Although computationally cheap, gradient-based methods such as Smooth-{\RT} show improvements over standard empirical risk minimization, while hybrid interval-and-gradient-based techniques such as IBP-Ex+{\RT} achieve the best accuracy among previous existing methods. Our method not only improves upon these results, but also reduces computational overhead—thanks to the strength of our regularization objective—by enabling by enabling the use of more computationally efficient variants such as Adv-{\RF}. 
%\todo{Dont-say-it-but-show-it flag. Claiming this in experiments without evidence can backfire.}
These findings suggest that in safety-critical and high-resolution image classification tasks our method is a powerful approach to overcome shortcut reliance and enhancing model reliability, while maintaining a low computational overhead, if desired. 

We highlight that in all but one metric across Decoy-MNIST, Decoy-DERM, ISIC, and Plant the Cert-{\RF} method performs best in terms of macro-average group accuracy and worst-group accuracy, suggesting that Cert-{\RF} is effective in mitigating shortcut learning. We note that on Plant, the gains from Cert-R4 are influenced by IBP’s overapproximation errors and the presence of large saliency regions covering a significant area of each image, which limit the relative benefit of its strong regularization. This effect is less pronounced in datasets with sparser explanations or smaller saliency regions, where Cert-R4 achieves the greatest improvement.

% \subsection{\st{Explanation Robustness of MLX methods}\vpp{}}
\subsection{Can masked feature suppresion encourage core feature learning?}

\begin{table}[htbp]
%\begin{wraptable}{r}{0.6\textwidth}\Huge
\begin{center}
\caption{\centering Input Gradient Fragility Ratio (Masked/Core).}
\makebox[\linewidth][c]{%
\resizebox{0.85\textwidth}{!}{%
\begin{tabular}{|c|c|c|c|c|}
\cline{2-5}
\multicolumn{1}{c|}{\textbf{ }} & \multicolumn{4}{|c|}{\textbf{Dataset}} \\
\cline{2-5}
\multicolumn{1}{c|}{\textbf{ }} & \multicolumn{1}{|c|}{\shortstack{Decoy MNIST}} & \multicolumn{1}{|c|}{\shortstack{Decoy DERM}} & \multicolumn{1}{|c|}{\shortstack{ISIC}} & \multicolumn{1}{|c|}{\shortstack{Plant}} \\  
\hline
\multicolumn{1}{|c|}{\shortstack{\textbf{Learning} \textbf{Objective $\downarrow$}}} & \multicolumn{4}{|c|}{\textbf{$\kappa_m / \kappa_{1-m}\ \textcolor{blue}{(\kappa_m)} (\downarrow)$}} \\
\hline
ERM & 1.497 \textcolor{blue}{(5.0e2)} & 0.851 \textcolor{blue}{(6.7e3)} & 0.933 \textcolor{blue}{(4.2e-1)} & 0.909 \textcolor{blue}{(4.0e3)} \\
\hline
{\RT} & 0.565 \textcolor{blue}{(4.4)} & 0.992 \textcolor{blue}{(5.8e3)} & \textbf{0.835} \textcolor{blue}{(1.7e-3)} & 0.970 \textcolor{blue}{(4.5e3)} \\
\hline
Smooth-{\RT} & 0.570 \textcolor{blue}{(4.3)} & 0.995 \textcolor{blue}{(3.9e3)} & 0.879 \textcolor{blue}{(1.0)} & 1.023 \textcolor{blue}{(2.1e3)} \\
\hline
IBP-Ex & 0.645 \textcolor{blue}{(5.6)} & 0.802 \textcolor{blue}{(3.0e3)} & 0.929 \textcolor{blue}{(1.8)} & 0.971 \textcolor{blue}{(4.7e3)} \\
\hline
IBP-Ex + {\RT} & 0.536 \textcolor{blue}{(3.4)} & 0.785 \textcolor{blue}{(2.9e3)} & 0.941 \textcolor{blue}{(2.9)} & 0.904 \textcolor{blue}{(1.1e3)} \\
\hline
Rand-{\RF} & 0.339 \textcolor{blue}{(2.1)} & 0.997 \textcolor{blue}{(3.8e3)} & 0.869 \textcolor{blue}{(1.4)} & 0.984 \textcolor{blue}{(1.3e3)} \\
\hline
Adv-{\RF} & 0.321 \textcolor{blue}{(2.0)} & 0.985 \textcolor{blue}{(3.5e3)} & 0.889 \textcolor{blue}{(4.0e-1)} & \textbf{0.733} \textcolor{blue}{(1.8e3)} \\
\hline
Cert-{\RF} & \textbf{0.024} \textcolor{blue}{(9.3e-1)} & \textbf{0.003} \textcolor{blue}{(1.3)} & 0.842 \textcolor{blue}{(9.0e-1)} & 0.914 \textcolor{blue}{(8.9e1)} \\
\hline
\end{tabular}
}}
\label{tab:relative_delta}
\end{center}
%\end{wraptable}
\end{table}

%We also measure the minimum $\kappa$ for which our inputs are 1-$\kappa$-input-robust in $\mathcal{B}_{\epsilon}^m$ (where the adversarial radius $\epsilon$ is fixed for each dataset), take the average value over multiple runs and report it as $\text{\textbf{Avg }} \mathbf{\kappa}$ in table \ref{tab:rob_r4}. We thus ensure that the per-pixel absolute difference between the upper and lower bound input gradient is \textit{always} lower than the reported $\kappa$, where this can be written mathematically as $||\eta_i^L - \eta_i^U|| \leq \kappa_i$ for an input example $x_i$, and $\eta_i^L$, $\eta_i^U$ are as in property \ref{eq:input_grad_prop}.
%\todo[inline]{provide some text to better explain the context. We are interested in lower ratio and lower $\kappa_m$, which must be explained upfront. How are their different values traded-off? The last para of this section looks a little forced, please revisit.}
In order to ensure that our proposed method fares better in suppressing the contribution of masked features, while encouraging learning from core features, we compute the ratio between the input gradient \textit{fragility} in the masked ($\kappa_m$) and core ($\kappa_{1-m}$) regions of the input. A low ratio and small $\kappa_m$ value indicate reduced input-gradient fragility (i.e., improved robustness) and reflect a model's improved ability to prioritize core features over non-core ones during learning, reflecting model sensitivity as discussed in \S\ref{subsec:theory}. We report this metric as $\kappa_{1-m} / \kappa_m$ in Table \ref{tab:relative_delta}, and show the absolute value of $\kappa_m$ in blue to indicate the order of magnitude. Mathematically, this can be written as $\|\bm{\eta}_{\bm{m}}^L\ - \bm{\eta}_{\bm{m}}^U\| \leq \kappa_m$ and $\|\bm{\eta}_{\bm{1-m}}^L\ - \eta_{\bm{1-m}}^U\| \leq \kappa_{\bm{1-m}}$, where $\bm{\eta}_{\bm{m}}^L, \bm{\eta}_{\bm{m}}^U, \bm{\eta}_{\bm{1-m}}^L, \bm{\eta}_{\bm{1-m}}^U$ are defined as in property \ref{eq:input_grad_prop}, with the only difference being that the latter two are computed with respect to the core features (i.e. masks are inverted). 

From Table ~\ref{tab:relative_delta}, we observe that Cert-{\RF} significantly outperforms previous MLX approaches in DecoyMNIST and DecoyDERM, being \textbf{14} and, respectively, \textbf{276} times better than the best performing previous technique. Much lower ratio scores demonstrate that, in contrast to prior methods, our approach simultaneously discourages masked feature reliance, while encouraging learning from core regions.
%For complex, real-world datasets, such as Plant, the adversarial attack approximation of {\RF} is the most optimal in terms of relative perturbation sensitivity, demonstrating computational and robustness efficiency at scale. % This isnt true based on the current table?
Lastly, we see that in the case of ISIC, a noisy, partially masked dataset, our method only competes with {\RT} in terms of this metric. However, judging from the absolute $\kappa_m$ value, we notice that {\RT} is insensitive to both core and masked features, which means that it learns poorly from \textit{any} region, fact evident in Table \ref{tab:perf_r4}'s \textbf{Wg Acc} as well. Thus, this observation establishes Cert-R4 as the most performant approach on ISIC in terms of relative robustness. 
%We observe that methods which directly optimize the input gradient (such as {\RT}) tend to obtain a lower $\kappa$ value -- indicating heightened robustness -- than their counterparts which only try to make the output robust to perturbations. In particular, it is easily noticeable that Cert-{\RF} is best at ensuring robustness to feature reliance in three out of four benchmark datasets, while for the fourth, the statistical Rand-{\RF} approach comes close to having 0 lower and upper bounds, while being computationally attractive. These results show and mark a shift from the \textit{locally linear} regime described in \cite{r_expl_constr} to a \textit{locally \textbf{constant}} regime, which is an even stronger guarantee to output reliance on spurious features. 

% \subsection{Sample Complexity of MLX Methods}
\subsection{Exploring the effect of noisy and partially specified masks} \label{sec:ablations}
% \todo[inline]{Changes the section title}
\begin{figure*}[htbp]
    \centerline{%
        \hfill\minipage{1.0\textwidth}
            \includegraphics[width=0.325\textwidth]{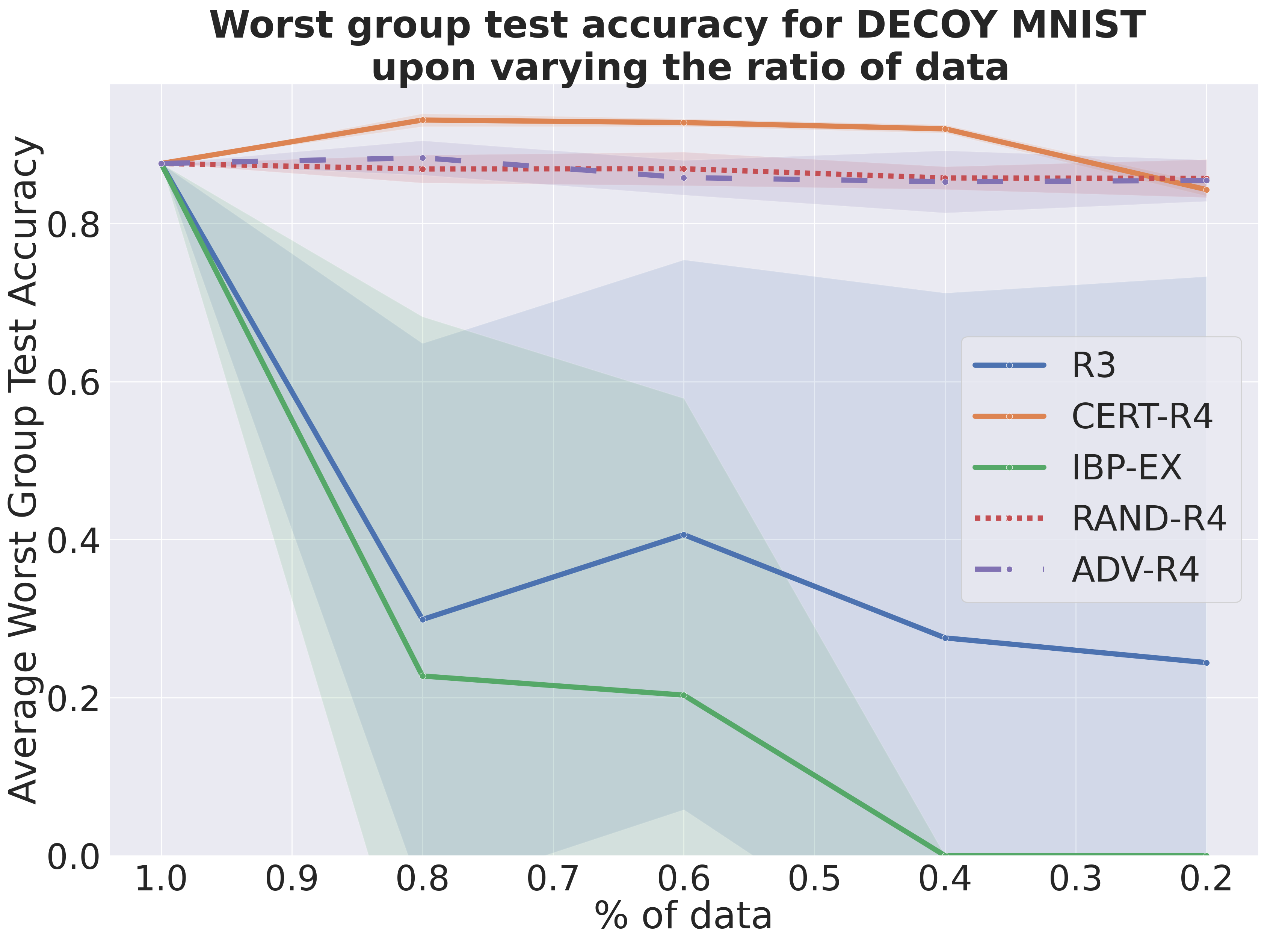}
            \includegraphics[width=0.33\textwidth]{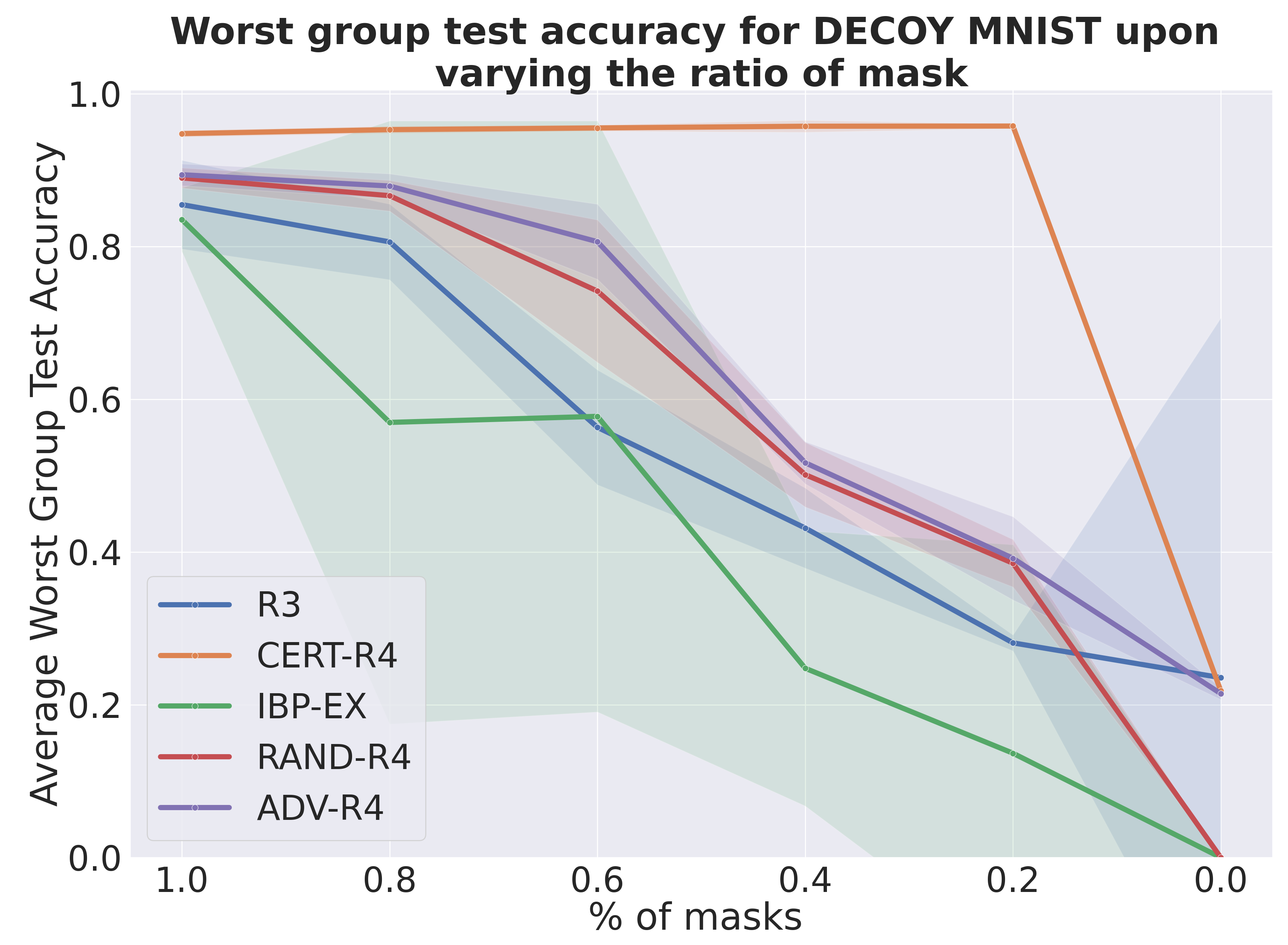}
            \includegraphics[width=0.33\textwidth]{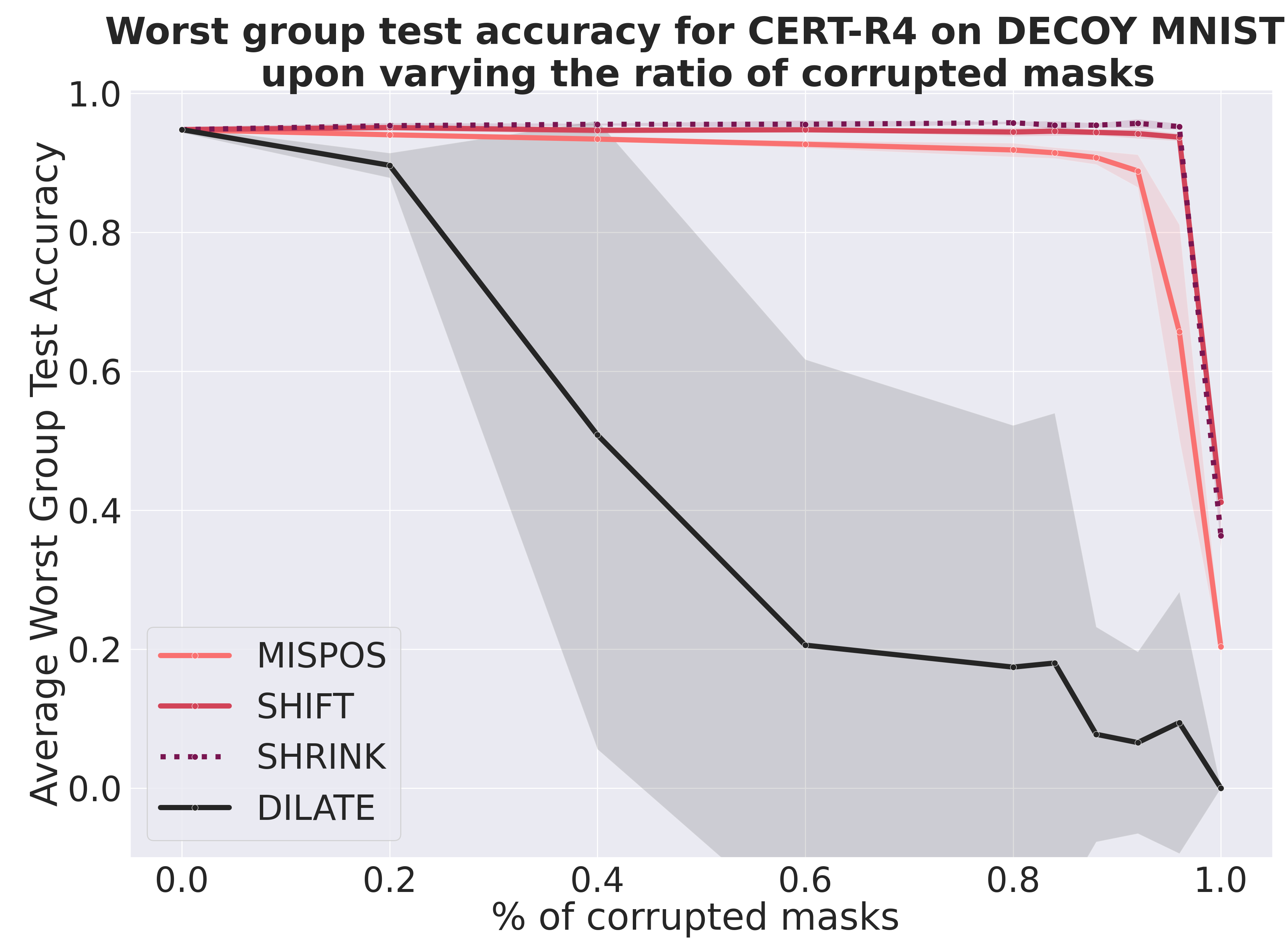}
        \endminipage\hfill
        }       
    \centerline{
        \hfill\minipage{1.0\textwidth}
            \includegraphics[width=0.325\textwidth]{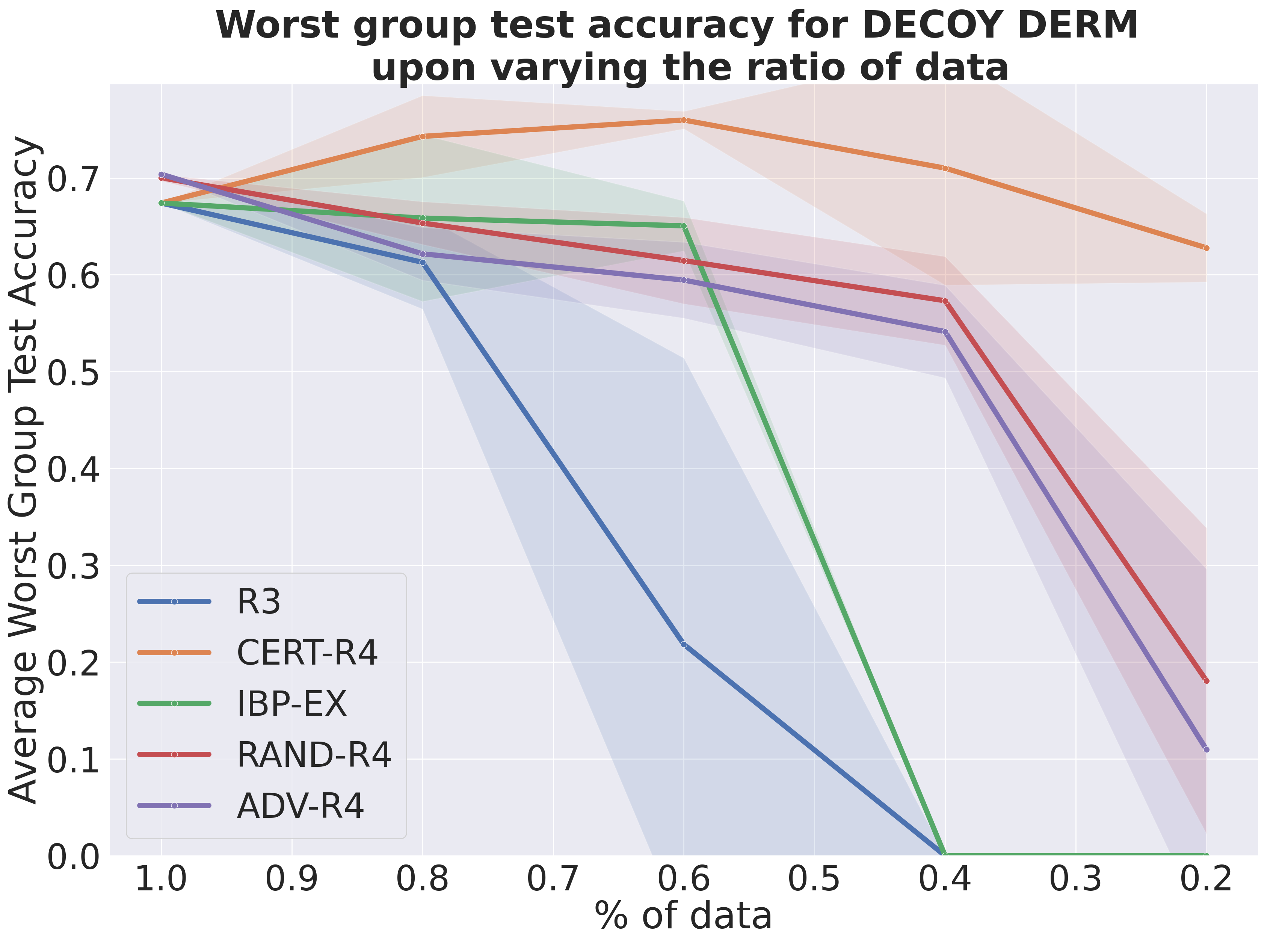}
            \includegraphics[width=0.33\textwidth]{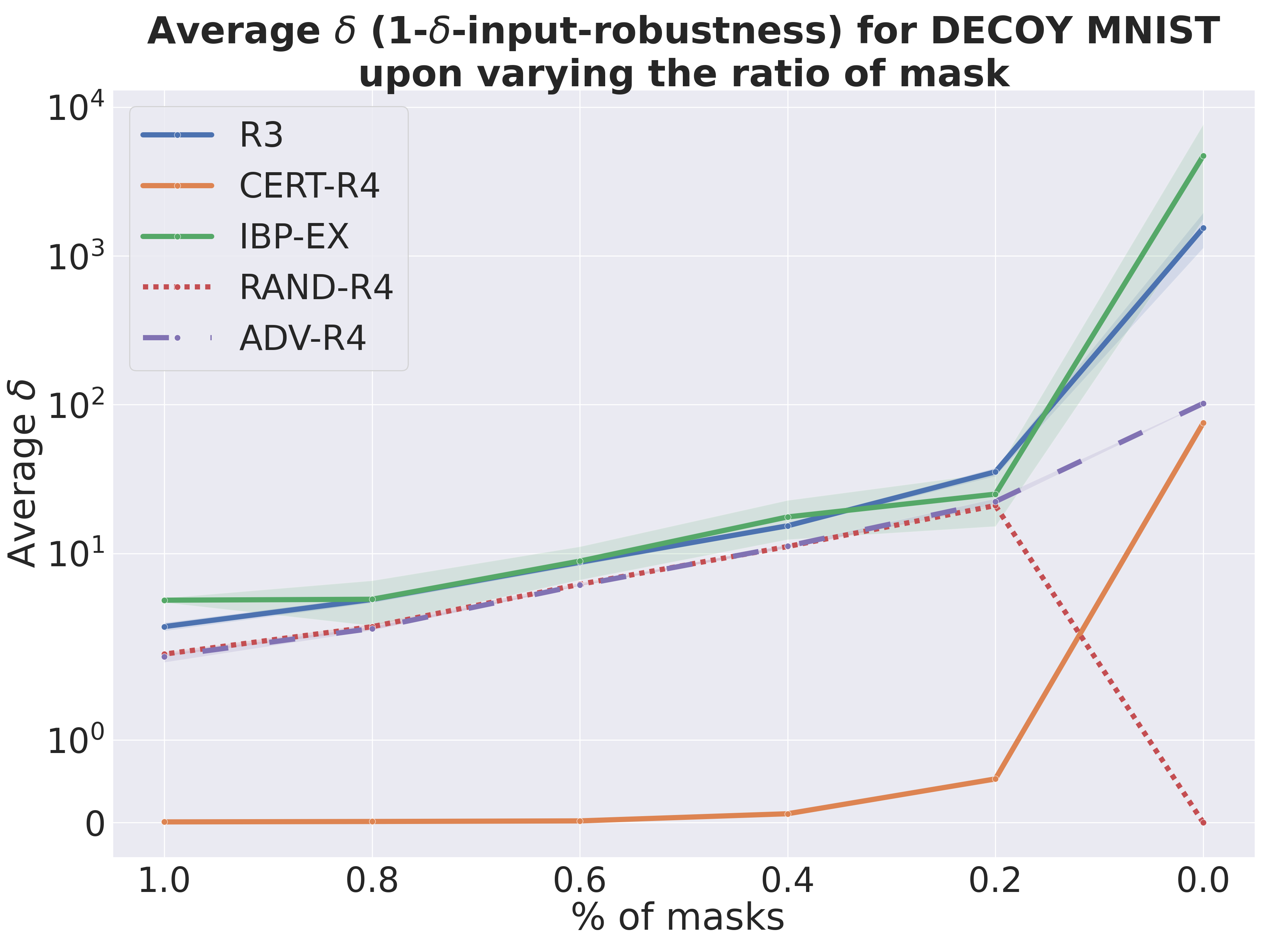}
            \includegraphics[width=0.33\textwidth]{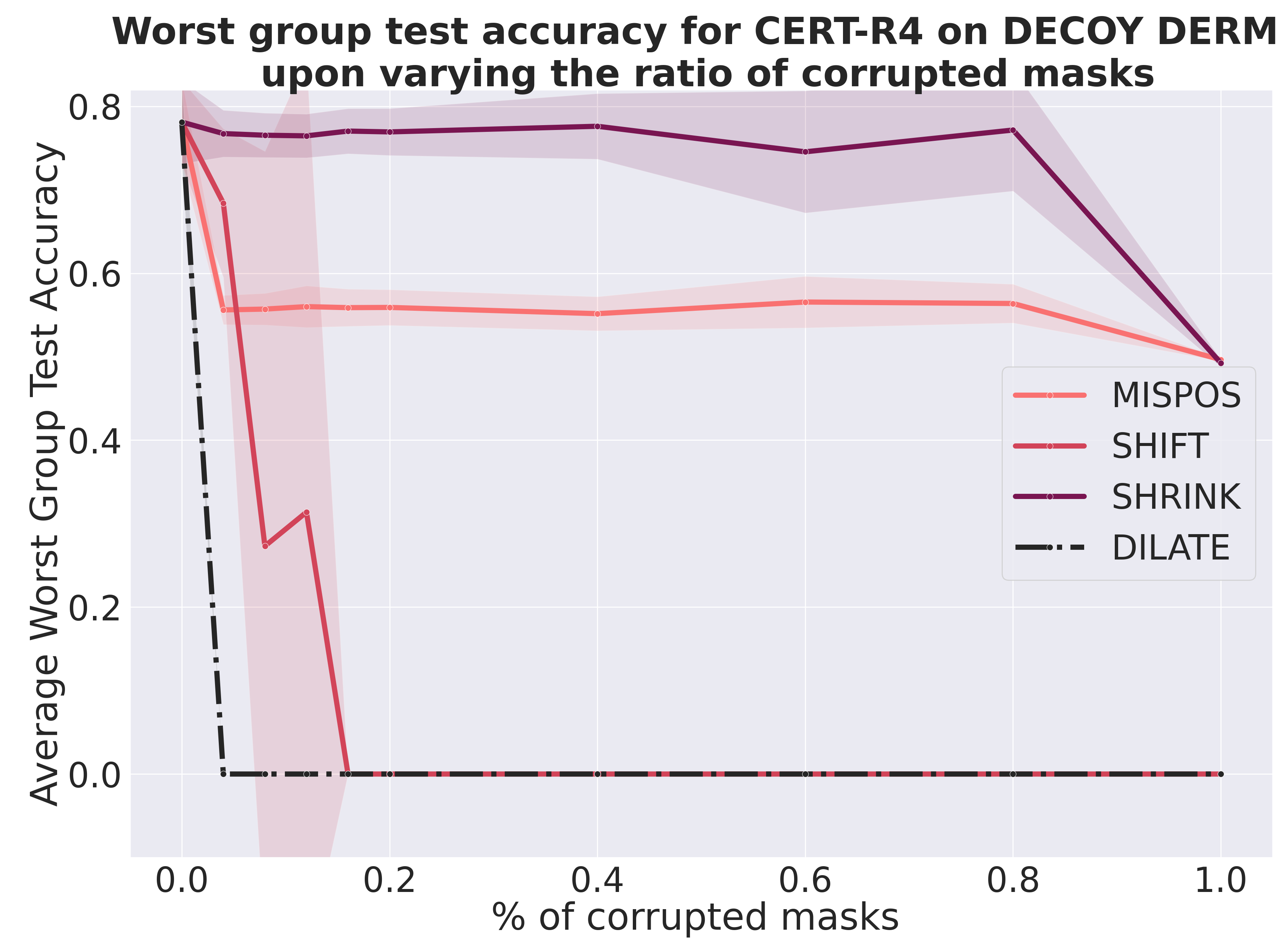}
        \endminipage\hfill
    }
    \caption{%{\vpp{ Rand-{\RF}, Adv-{RF} line style is varying across plots, please fix if you can. }
    Experimental ablations to understand the performance of {\RF}. \textbf{Left column:} We plot the worst-group accuracy as we reduce the percentage of data and masks available at training time for DecoyMNIST (top) and DecoyDERM (bottom) \textbf{Center column:} We plot the worst-group accuracy (top) and average $\kappa$ (i.e. explanation fragility) as we reduce the percentage of masks available at training time for DecoyMNIST. \textbf{Right column:} We plot the {\RF} worst-group accuracy as we vary the percentage of masks considering different types of corruptions for DecoyMNIST (top) and DecoyDERM (bottom).}\label{fig:ablations}
\end{figure*}

In real-world scenarios, datasets are likely to contain few and potentially noisy masks, as obtaining high-quality annotations is often costly. Therefore, we aim to evaluate how effectively our method can learn shortcut feature patterns under such conditions, relative to baseline approaches. This provides a more informative benchmark of how MLX techniques perform in realistic, non-ideal settings. Accordingly, in Figure~\ref{fig:ablations}, we present a series of ablations examining the sample complexity of state-of-the-art MLX methods. Our findings are complemented by additional ablations of other critical hyperparameters, such as model size in Appendix \ref{app_sec:model_size_ablations}, and of the effect of mask corruptions on previous SOTA methods (\RT\, and IBP-Ex) in Appendix \ref{app_sec:mask_corr_extra}. We hope this analysis serves as a foundation for developing more sample-efficient approaches, which are particularly relevant in resource-constrained or safety-critical domains. 

\textbf{Resilience to data and mask variations.} In the left column of Figure~\ref{fig:ablations} we plot the results of simultaneously reducing the amount of data and masks (e.g., we keep the original proportion of masks in the dataset fixed). The general trend we observe indicates that while traditional regularization methods ({\RT}) struggle with limited mask \textit{and} data availability, robustness-based techniques are far more effective in maintaining worst-group accuracy. In particular, we notice that our novel method's statistical and adversarial variants fare better at maintaining effective in sustaining worst-group accuracy even in a low-data regime, while Cert-{\RF} excels in preserving performance, losing less than 10\% accuracy in both datasets with the reduction of available information.

\textbf{Resilience to mask variations.}  In the center column, we fix the dataset size and vary only the proportion of masks available during training. For each mask fraction, we maintain a constant MLX regularization strength (defined as the gradient magnitude for gradient-based methods or the adversarial loss for robustness-based methods) and scale the weight decay coefficient proportionally to the mask percentage. This setup isolates the effect of MLX regularization on sample complexity while minimizing the influence of weight regularization; we defer an in-depth analysis of the effects of weight decay to Appendix~\ref{app_sec:weight_decay}. We find that both statistical and first-order variants of {\RF} significantly outperform prior methods such as {\RT} and IBP-Ex, which degrade rapidly as mask availability decreases. Notably, Cert-{\RF} remains unaffected by missing masks (except at 0\%) and achieves superior worst-group performance despite being trained without any weight decay, highlighting its sample efficiency. Finally, the low variance across runs in both center and left column plots further demonstrates the stability of our approach.

The center bottom plot shows a similar trend to the top plot, this time for explanation fragility. Although IBP-Ex and {\RT} exhibit sharp increases in fragility as mask availability decreases, Cert-{\RF}'s $\kappa$-input-robustness remains stable until mask ratios drop significantly. This robustness stems from directly optimizing the bounds of input gradients. Consequently, {\RF} not only avoids reliance on masked features but also remains resilient to distributional shifts in those features, even when mask coverage is limited. This is further supported by results on datasets such as ISIC (Table~\ref{tab:perf_r4}), where Cert-{\RF} performs well despite masks being available for less than half of the samples in the dataset.

\textbf{Robustness of {\RF} to mask corruptions.} We also examine the impact of corrupted masks, which can arise from human error or automated labeling inaccuracies. We consider four relevant corruption types: (i) `misposition`, where the mask is placed far from the non-core region; (ii) `shift`, involving slight displacement with partial overlap; (iii) `shrink`, and (iv) `dilation`, which reduce or expand the mask area without altering its position. Among these, `dilation` is the most detrimental, which is unsurprising given that it also downweights the contribution of core features. Shift and `misposition` also degrade performance, particularly in datasets with large masks such as DecoyDERM, while `shrink` has a minimal effect on {\RF}. These results suggest that `dilation` and `shift` corruptions are particularly harmful because they simultaneously suppress core features, which should remain informative, and reinforce non-core ones in regions that no longer align with the original, accurate masks. `Misposition` errors, by contrast, are partially offset by high gradient activity in the misplaced regions. Therefore, interestingly, our ablation study suggests that practitioners should focus on mask quality rather than mask quantity when employing MLX approaches.

\section{Discussion}

We introduced a novel gradient- and robustness-based method to mitigate shortcut learning. We find that the strength of our proposed  learning objective and its ability to lend itself well to a number of different approximations, achieves superior results across all datasets. Notably, the method is particularly effective in practical scenarios involving noisy annotations, highly complex models, and small, imbalanced datasets. Promising directions for future work include developing a theoretical framework to understand the role of weight regularization in these scenarios, leveraging techniques with statistical guarantees (e.g., randomized smoothing), and improving bound propagation tightness using methods such as CROWN. In the long term, advancing shortcut learning mitigation in large language models will require scalable methods that account for both semantic understanding, as well as the structure encoded in learned representations.

\par{\textbf{Limitations.}} Our method is primarily limited in settings where large models are used in combination with few or sparse masks and constrained computational resources. In such cases, one may need to employ Adv-{RF} or Rand-{RF} with reduced iterations or samples, yielding a less precise estimate of the perturbed maximum gradient in spurious regions. Another common drawback of techniques employing saliency maps, as noted by \citet{li2023whac}, is that when multiple shortcuts are present, reducing reliance on only some can lead models to overcompensate by relying more heavily on others. While this is a valid concern, we argue that it arises primarily during identification of non-core/shortcut features or mask-acquisition. This highlights the importance of developing methods with low failure rates in detecting such features and constructing accurate masks. However, provided that this condition is met, we believe that our approach can effectively and efficiently mitigate shortcut learning in these types of settings.

\bibliography{main}

\begin{thebibliography}{48}
\providecommand{\natexlab}[1]{#1}
\providecommand{\url}[1]{\texttt{#1}}
\expandafter\ifx\csname urlstyle\endcsname\relax
  \providecommand{\doi}[1]{doi: #1}\else
  \providecommand{\doi}{doi: \begingroup \urlstyle{rm}\Url}\fi

\bibitem[Bommasani et~al.(2021)Bommasani, Hudson, Adeli, Altman, Arora, von Arx, Bernstein, Bohg, Bosselut, Brunskill, et~al.]{bommasani2021opportunities}
Rishi Bommasani, Drew~A Hudson, Ehsan Adeli, Russ Altman, Simran Arora, Sydney von Arx, Michael~S Bernstein, Jeannette Bohg, Antoine Bosselut, Emma Brunskill, et~al.
\newblock On the opportunities and risks of foundation models.
\newblock \emph{arXiv preprint arXiv:2108.07258}, 2021.

\bibitem[Codella et~al.(2019)Codella, Rotemberg, Tschandl, Celebi, Dusza, Gutman, Helba, Kalloo, Liopyris, Marchetti, et~al.]{codella2019skin}
Noel Codella, Veronica Rotemberg, Philipp Tschandl, M~Emre Celebi, Stephen Dusza, David Gutman, Brian Helba, Aadi Kalloo, Konstantinos Liopyris, Michael Marchetti, et~al.
\newblock Skin lesion analysis toward melanoma detection 2018: A challenge hosted by the international skin imaging collaboration (isic).
\newblock \emph{arXiv preprint arXiv:1902.03368}, 2019.

\bibitem[DeGrave et~al.(2021)DeGrave, Janizek, and Lee]{degrave_pneum_shortcut}
Alex~J DeGrave, Joseph~D Janizek, and Su-In Lee.
\newblock Ai for radiographic covid-19 detection selects shortcuts over signal.
\newblock \emph{Nature Machine Intelligence}, 3\penalty0 (7):\penalty0 610--619, 2021.

\bibitem[Devlin et~al.(2019)Devlin, Chang, Lee, and Toutanova]{devlin2019bert}
Jacob Devlin, Ming-Wei Chang, Kenton Lee, and Kristina Toutanova.
\newblock Bert: Pre-training of deep bidirectional transformers for language understanding.
\newblock In \emph{Proceedings of the 2019 conference of the North American chapter of the association for computational linguistics: human language technologies, volume 1 (long and short papers)}, pp.\  4171--4186, 2019.

\bibitem[Dombrowski et~al.(2019)Dombrowski, Alber, Anders, Ackermann, M{\"u}ller, and Kessel]{dombrowski2019explanations}
Ann-Kathrin Dombrowski, Maximillian Alber, Christopher Anders, Marcel Ackermann, Klaus-Robert M{\"u}ller, and Pan Kessel.
\newblock Explanations can be manipulated and geometry is to blame.
\newblock \emph{Advances in neural information processing systems}, 32, 2019.

\bibitem[Dombrowski et~al.(2022)Dombrowski, Anders, M{\"u}ller, and Kessel]{dombrowski2022towards}
Ann-Kathrin Dombrowski, Christopher~J Anders, Klaus-Robert M{\"u}ller, and Pan Kessel.
\newblock Towards robust explanations for deep neural networks.
\newblock \emph{Pattern Recognition}, 121:\penalty0 108194, 2022.

\bibitem[Dong et~al.(2022)Dong, Xu, Yang, Pang, Deng, Su, and Zhu]{dongexploring}
Yinpeng Dong, Ke~Xu, Xiao Yang, Tianyu Pang, Zhijie Deng, Hang Su, and Jun Zhu.
\newblock Exploring memorization in adversarial training.
\newblock In \emph{International Conference on Learning Representations}, 2022.

\bibitem[Geirhos et~al.(2020)Geirhos, Jacobsen, Michaelis, Zemel, Brendel, Bethge, and Wichmann]{geirhos2020shortcut}
Robert Geirhos, J{\"o}rn-Henrik Jacobsen, Claudio Michaelis, Richard Zemel, Wieland Brendel, Matthias Bethge, and Felix~A Wichmann.
\newblock Shortcut learning in deep neural networks.
\newblock \emph{Nature Machine Intelligence}, 2\penalty0 (11):\penalty0 665--673, 2020.

\bibitem[Goodfellow et~al.(2016)Goodfellow, Bengio, Courville, and Bengio]{deeplian}
Ian Goodfellow, Yoshua Bengio, Aaron Courville, and Yoshua Bengio.
\newblock \emph{Deep learning}, volume~1.
\newblock MIT Press, 2016.

\bibitem[Goodfellow et~al.(2014)Goodfellow, Shlens, and Szegedy]{goodfellow2014explaining}
Ian~J Goodfellow, Jonathon Shlens, and Christian Szegedy.
\newblock Explaining and harnessing adversarial examples.
\newblock \emph{arXiv preprint arXiv:1412.6572}, 2014.

\bibitem[Gowal et~al.(2018)Gowal, Dvijotham, Stanforth, Bunel, Qin, Uesato, Arandjelovic, Mann, and Kohli]{gowal2018effectiveness}
Sven Gowal, Krishnamurthy Dvijotham, Robert Stanforth, Rudy Bunel, Chongli Qin, Jonathan Uesato, Relja Arandjelovic, Timothy Mann, and Pushmeet Kohli.
\newblock On the effectiveness of interval bound propagation for training verifiably robust models.
\newblock \emph{arXiv preprint arXiv:1810.12715}, 2018.

\bibitem[He et~al.(2016)He, Zhang, Ren, and Sun]{resnet}
Kaiming He, Xiangyu Zhang, Shaoqing Ren, and Jian Sun.
\newblock Deep residual learning for image recognition.
\newblock In \emph{Proceedings of the IEEE conference on computer vision and pattern recognition}, pp.\  770--778, 2016.

\bibitem[Heo et~al.(2023)Heo, Piratla, Wicker, and Weller]{heo_perturbations}
Juyeon Heo, Vihari Piratla, Matthew Wicker, and Adrian Weller.
\newblock Use perturbations when learning from explanations.
\newblock \emph{Advances in Neural Information Processing Systems}, 36:\penalty0 26872--26897, 2023.

\bibitem[Jia et~al.(2018)Jia, Lansdall-Welfare, and Cristianini]{jia2018right}
Sen Jia, Thomas Lansdall-Welfare, and Nello Cristianini.
\newblock Right for the right reason: Training agnostic networks.
\newblock In \emph{Advances in Intelligent Data Analysis XVII: 17th International Symposium, IDA 2018,’s-Hertogenbosch, The Netherlands, October 24--26, 2018, Proceedings 17}, pp.\  164--174. Springer, 2018.

\bibitem[Katz et~al.(2017)Katz, Barrett, Dill, Julian, and Kochenderfer]{katz2017reluplex}
Guy Katz, Clark Barrett, David~L Dill, Kyle Julian, and Mykel~J Kochenderfer.
\newblock Reluplex: An efficient smt solver for verifying deep neural networks.
\newblock In \emph{Computer Aided Verification: 29th International Conference, CAV 2017, Heidelberg, Germany, July 24-28, 2017, Proceedings, Part I 30}, pp.\  97--117. Springer, 2017.

\bibitem[Kavumba et~al.(2021)Kavumba, Heinzerling, Brassard, and Inui]{kavumba2021learning}
Pride Kavumba, Benjamin Heinzerling, Ana Brassard, and Kentaro Inui.
\newblock Learning to learn to be right for the right reasons.
\newblock In \emph{Proceedings of the 2021 Conference of the North American Chapter of the Association for Computational Linguistics: Human Language Technologies}, pp.\  3890--3898, 2021.

\bibitem[Koh \& Liang(2017)Koh and Liang]{koh2017understanding}
Pang~Wei Koh and Percy Liang.
\newblock Understanding black-box predictions via influence functions.
\newblock In \emph{International conference on machine learning}, pp.\  1885--1894. PMLR, 2017.

\bibitem[Lee et~al.(2022)Lee, Yao, and Finn]{lee2022diversify}
Yoonho Lee, Huaxiu Yao, and Chelsea Finn.
\newblock Diversify and disambiguate: Learning from underspecified data.
\newblock \emph{arXiv preprint arXiv:2202.03418}, 2022.

\bibitem[Leofante \& Wicker(2025)Leofante and Wicker]{leofante2025robust}
Francesco Leofante and Matthew Wicker.
\newblock Robust explainable ai, 2025.

\bibitem[Li et~al.(2023)Li, Evtimov, Gordo, Hazirbas, Hassner, Ferrer, Xu, and Ibrahim]{li2023whac}
Zhiheng Li, Ivan Evtimov, Albert Gordo, Caner Hazirbas, Tal Hassner, Cristian~Canton Ferrer, Chenliang Xu, and Mark Ibrahim.
\newblock A whac-a-mole dilemma: Shortcuts come in multiples where mitigating one amplifies others.
\newblock In \emph{Proceedings of the IEEE/CVF Conference on Computer Vision and Pattern Recognition}, pp.\  20071--20082, 2023.

\bibitem[Linardatos et~al.(2020)Linardatos, Papastefanopoulos, and Kotsiantis]{linardatos2020explainable}
Pantelis Linardatos, Vasilis Papastefanopoulos, and Sotiris Kotsiantis.
\newblock Explainable ai: A review of machine learning interpretability methods.
\newblock \emph{Entropy}, 23\penalty0 (1):\penalty0 18, 2020.

\bibitem[Lundberg \& Lee(2017)Lundberg and Lee]{lundberg2017unified}
Scott~M Lundberg and Su-In Lee.
\newblock A unified approach to interpreting model predictions.
\newblock \emph{Advances in neural information processing systems}, 30, 2017.

\bibitem[Maas et~al.(2011)Maas, Daly, Pham, Huang, Ng, and Potts]{imdb_dset}
Andrew Maas, Raymond~E Daly, Peter~T Pham, Dan Huang, Andrew~Y Ng, and Christopher Potts.
\newblock Learning word vectors for sentiment analysis.
\newblock In \emph{Proceedings of the 49th annual meeting of the association for computational linguistics: Human language technologies}, pp.\  142--150, 2011.

\bibitem[Madry et~al.(2017)Madry, Makelov, Schmidt, Tsipras, and Vladu]{madry2017pgd}
Aleksander Madry, Aleksandar Makelov, Ludwig Schmidt, Dimitris Tsipras, and Adrian Vladu.
\newblock Towards deep learning models resistant to adversarial attacks.
\newblock \emph{arXiv preprint arXiv:1706.06083}, 2017.

\bibitem[McCoy et~al.(2019)McCoy, Pavlick, and Linzen]{mccoy2019right}
R~Thomas McCoy, Ellie Pavlick, and Tal Linzen.
\newblock Right for the wrong reasons: Diagnosing syntactic heuristics in natural language inference.
\newblock \emph{arXiv preprint arXiv:1902.01007}, 2019.

\bibitem[Ribeiro et~al.(2016)Ribeiro, Singh, and Guestrin]{lime}
Marco~Tulio Ribeiro, Sameer Singh, and Carlos Guestrin.
\newblock " why should i trust you?" explaining the predictions of any classifier.
\newblock In \emph{Proceedings of the 22nd ACM SIGKDD international conference on knowledge discovery and data mining}, pp.\  1135--1144, 2016.

\bibitem[Rieger et~al.(2020)Rieger, Singh, Murdoch, and Yu]{rieger2020interpretations}
Laura Rieger, Chandan Singh, William Murdoch, and Bin Yu.
\newblock Interpretations are useful: penalizing explanations to align neural networks with prior knowledge.
\newblock In \emph{International conference on machine learning}, pp.\  8116--8126. PMLR, 2020.

\bibitem[Ross et~al.(2017)Ross, Hughes, and Doshi-Velez]{rrr}
Andrew~Slavin Ross, Michael~C Hughes, and Finale Doshi-Velez.
\newblock Right for the right reasons: training differentiable models by constraining their explanations.
\newblock In \emph{Proceedings of the 26th International Joint Conference on Artificial Intelligence}, pp.\  2662--2670, 2017.

\bibitem[Rudinger et~al.(2018)Rudinger, Naradowsky, Leonard, and Van~Durme]{rudinger2018gender}
Rachel Rudinger, Jason Naradowsky, Brian Leonard, and Benjamin Van~Durme.
\newblock Gender bias in coreference resolution.
\newblock \emph{arXiv preprint arXiv:1804.09301}, 2018.

\bibitem[Rump(1999)]{rump1999fast}
Siegfried~M Rump.
\newblock Fast and parallel interval arithmetic.
\newblock \emph{BIT Numerical Mathematics}, 39:\penalty0 534--554, 1999.

\bibitem[Sagawa et~al.()Sagawa, Koh, Hashimoto, and Liang]{sagawadistributionally}
Shiori Sagawa, Pang~Wei Koh, Tatsunori~B Hashimoto, and Percy Liang.
\newblock Distributionally robust neural networks.
\newblock In \emph{International Conference on Learning Representations}.

\bibitem[Schramowski et~al.(2020)Schramowski, Stammer, Teso, Brugger, Herbert, Shao, Luigs, Mahlein, and Kersting]{schramowski2020making}
Patrick Schramowski, Wolfgang Stammer, Stefano Teso, Anna Brugger, Franziska Herbert, Xiaoting Shao, Hans-Georg Luigs, Anne-Katrin Mahlein, and Kristian Kersting.
\newblock Making deep neural networks right for the right scientific reasons by interacting with their explanations.
\newblock \emph{Nature Machine Intelligence}, 2\penalty0 (8):\penalty0 476--486, 2020.

\bibitem[Selvaraju et~al.(2019)Selvaraju, Lee, Shen, Jin, Ghosh, Heck, Batra, and Parikh]{selvaraju2019taking}
Ramprasaath~R Selvaraju, Stefan Lee, Yilin Shen, Hongxia Jin, Shalini Ghosh, Larry Heck, Dhruv Batra, and Devi Parikh.
\newblock Taking a hint: Leveraging explanations to make vision and language models more grounded.
\newblock In \emph{Proceedings of the IEEE/CVF international conference on computer vision}, pp.\  2591--2600, 2019.

\bibitem[Selvaraju et~al.(2020)Selvaraju, Cogswell, Das, Vedantam, Parikh, and Batra]{selvaraju2020grad}
Ramprasaath~R Selvaraju, Michael Cogswell, Abhishek Das, Ramakrishna Vedantam, Devi Parikh, and Dhruv Batra.
\newblock Grad-cam: visual explanations from deep networks via gradient-based localization.
\newblock \emph{International journal of computer vision}, 128:\penalty0 336--359, 2020.

\bibitem[Shah et~al.(2020)Shah, Tamuly, Raghunathan, Jain, and Netrapalli]{shah2020pitfalls}
Harshay Shah, Kaustav Tamuly, Aditi Raghunathan, Prateek Jain, and Praneeth Netrapalli.
\newblock The pitfalls of simplicity bias in neural networks.
\newblock \emph{Advances in Neural Information Processing Systems}, 33:\penalty0 9573--9585, 2020.

\bibitem[Shao et~al.(2021)Shao, Skryagin, Stammer, Schramowski, and Kersting]{shao2021right}
Xiaoting Shao, Arseny Skryagin, Wolfgang Stammer, Patrick Schramowski, and Kristian Kersting.
\newblock Right for better reasons: Training differentiable models by constraining their influence functions.
\newblock In \emph{Proceedings of the AAAI Conference on Artificial Intelligence}, volume~35, pp.\  9533--9540, 2021.

\bibitem[Singh et~al.(2018)Singh, Murdoch, and Yu]{singh2018hierarchical}
Chandan Singh, W~James Murdoch, and Bin Yu.
\newblock Hierarchical interpretations for neural network predictions.
\newblock \emph{arXiv preprint arXiv:1806.05337}, 2018.

\bibitem[Singla et~al.(2022)Singla, Moayeri, and Feizi]{singla2022salientimagenet}
S~Singla, M~Moayeri, and S~Feizi.
\newblock Core risk minimization using salient imagenet. corr, abs/2203.15566, 2022. doi: 10.48550.
\newblock \emph{arXiv preprint arXiv.2203.15566}, 2022.

\bibitem[Smilkov et~al.(2017)Smilkov, Thorat, Kim, Vi{\'e}gas, and Wattenberg]{smilkov2017smoothgrad}
Daniel Smilkov, Nikhil Thorat, Been Kim, Fernanda Vi{\'e}gas, and Martin Wattenberg.
\newblock Smoothgrad: removing noise by adding noise.
\newblock \emph{arXiv preprint arXiv:1706.03825}, 2017.

\bibitem[Sundararajan et~al.(2017)Sundararajan, Taly, and Yan]{sundararajan2017axiomatic}
Mukund Sundararajan, Ankur Taly, and Qiqi Yan.
\newblock Axiomatic attribution for deep networks.
\newblock In \emph{International conference on machine learning}, pp.\  3319--3328. PMLR, 2017.

\bibitem[Tramer et~al.(2020)Tramer, Carlini, Brendel, and Madry]{tramer2020adaptive}
Florian Tramer, Nicholas Carlini, Wieland Brendel, and Aleksander Madry.
\newblock On adaptive attacks to adversarial example defenses.
\newblock \emph{Advances in neural information processing systems}, 33:\penalty0 1633--1645, 2020.

\bibitem[Wang \& Culotta(2020)Wang and Culotta]{lm_spur}
Zhao Wang and Aron Culotta.
\newblock Identifying spurious correlations for robust text classification.
\newblock \emph{arXiv preprint arXiv:2010.02458}, 2020.

\bibitem[Wicker et~al.(2022)Wicker, Heo, Costabello, and Weller]{r_expl_constr}
Matthew~Robert Wicker, Juyeon Heo, Luca Costabello, and Adrian Weller.
\newblock Robust explanation constraints for neural networks.
\newblock In \emph{The Eleventh International Conference on Learning Representations}, 2022.

\bibitem[Yang et~al.(2023)Yang, Shi, Wei, Liu, Zhao, Ke, Pfister, and Ni]{medmnistv2}
Jiancheng Yang, Rui Shi, Donglai Wei, Zequan Liu, Lin Zhao, Bilian Ke, Hanspeter Pfister, and Bingbing Ni.
\newblock Medmnist v2-a large-scale lightweight benchmark for 2d and 3d biomedical image classification.
\newblock \emph{Scientific Data}, 10\penalty0 (1):\penalty0 41, 2023.

\bibitem[Ye et~al.(2024)Ye, Zheng, Cao, Ma, and Zhang]{ye2024spurious}
Wenqian Ye, Guangtao Zheng, Xu~Cao, Yunsheng Ma, and Aidong Zhang.
\newblock Spurious correlations in machine learning: A survey.
\newblock \emph{arXiv preprint arXiv:2402.12715}, 2024.

\bibitem[Zech et~al.(2018)Zech, Badgeley, Liu, Costa, Titano, and Oermann]{zech_pneum_tags}
John~R Zech, Marcus~A Badgeley, Manway Liu, Anthony~B Costa, Joseph~J Titano, and Eric~Karl Oermann.
\newblock Variable generalization performance of a deep learning model to detect pneumonia in chest radiographs: a cross-sectional study.
\newblock \emph{PLoS medicine}, 15\penalty0 (11):\penalty0 e1002683, 2018.

\bibitem[Zhang et~al.(2024)Zhang, Williams, and Toni]{zhang2024targeted}
Dekai Zhang, Matt Williams, and Francesca Toni.
\newblock Targeted activation penalties help cnns ignore spurious signals.
\newblock In \emph{Proceedings of the AAAI Conference on Artificial Intelligence}, volume~38, pp.\  16705--16713, 2024.

\bibitem[Zou et~al.(2023)Zou, Wang, Carlini, Nasr, Kolter, and Fredrikson]{GCG}
Andy Zou, Zifan Wang, Nicholas Carlini, Milad Nasr, J~Zico Kolter, and Matt Fredrikson.
\newblock Universal and transferable adversarial attacks on aligned language models.
\newblock \emph{arXiv preprint arXiv:2307.15043}, 2023.

\end{thebibliography}
\bibliographystyle{tmlr}

\appendix
\newpage
\appendix
\onecolumn
\section{Benchmark Datasets} \label{app_sec:benchmark_dsets}

\noindent
\textbf{Decoy MNIST.} The first synthetic dataset we consider is Decoy-MNIST a popular MLX benchmark proposed in \cite{rrr}. This a modification of the classical MNIST dataset which is comprised of 28 by 28 pixel images each representing a digit 0 through 9. To test shortcut learning, a small patch is added into the corners (with corners being selected randomly) of each image. The grey-scale color of the patch added to the training samples is exactly correlated with the label of the image while at test time the color of the patch has no correlation with the true label. 

\noindent
\textbf{Decoy DERM.} The second synthetic benchmark is a novel modification of the skin cancer classification ``DermMNIST'' benchmark \cite{medmnistv2} that we term DecoyDERM. As in DecoyMNIST, DecoyDERM adds a small colored patch into the corner or each image, with the color of the patch being correlated with the label for training images and uncorrelated with the label for test images. Advantage of this benchmark compared to DecoyMNIST is (1) it mirrors a real-world, safety-critical scenario where shortcut learning has been observed and has potential adverse affects (2) is comprised of full-color medical images at a variety of resolutions from 28 by 28 up to 224 by 224. Thus this dataset is a strong benchmark for understanding the scalability and potential real-world effectiveness of MLX approaches.

\noindent
\textbf{ISIC.} The ISIC dataset \cite{codella2019skin} contains a number of benign and malignant skin lesions, which induce a skin cancer detection task for our MLX methods. Previous work that addressed \textit{shortcut learning} \cite{rieger2020interpretations} showed that because approximately 50\% of the benign examples contain colourful patches, unpenalized DNNs rely on such spurious visual artifacts when attempting to detect sking cancer, thus generalizing poorly at inference time. As such, we follow the setup described in \cite{rieger2020interpretations} to split the dataset into three groups: non-cancerous images without patch (NCNP) and with patch (NCP), as well as cancerous images (C), out of which only the NCP group contains associated masks. 

\noindent
\textbf{Plant Phenotyping.} The plant phenotyping dataset of ``plant'' for short is a dataset of microscopy images of plant leaves each representing a different phenotype that the model must classify. Previous studies have revealed that machine learning models tend to rely on the augur or solution that the plant sample is resting in rather than on phenotypic traits of the plant itself. Automatic feature extraction has been used identify and mask regions of each image constitute background (spurious) and foreground (non-spurious). 

\noindent
\textbf{Salient ImageNet.}  The Salient ImageNet dataset \citep{singla2022salientimagenet} provides a series of human identified spurious correlations in the popular ImageNet dataset. Each mask determines if pixels belong to a ``core'' (non-shortcut) or ``spurious'' (shortcut) feature group. We use the 5000 available image and masks for this datset to understand the effectiveness of our approach when fine-tuning models (e.g., ResNet-18) rather than training from scratch.

\noindent
\textbf{Decoy IMDB.} The IMDB dataset \citep{imdb_dset} is made up of 50000 IMDB movie text reviews, half of which (25000) are in the train set and the other half (25000) in the test set, and induces a binary classification sentiment analysis task. Previous works such as \citet{lm_spur} have identified and automatically extracted spurious features from this dataset, namely words that are correlated with a specific class. An example is the word "\textit{spielberg}", which is the name of a well-known movie director that is correlated with a positive sentiment, due to the largely successful and appealing to audience movies he has produced. We make this task harder by prepending to every train set positive-sentiment review the word "\textit{spielberg}" and at the beginning of every train set negative-sentiment review the word "\textit{jonah}". For every example of the test set, one of the two words above is chosen at random and inserted at a random position in the sentence. We employ the method of \citet{lm_spur} to extract spurious words.

\section{Model Architecture} \label{app_sec:model_archs}
\noindent
\textbf{\large{Decoy MNIST.}}
\begin{verbatim}
Sequential(
    (0): Linear(in_features=784, out_features=512, bias=True)
    (1): ReLU()
    (2): Linear(in_features=512, out_features=10, bias=True)
)
\end{verbatim}

\noindent
\textbf{\large{Decoy DERM.}}
\begin{verbatim}
Sequential(
    (0): Conv2d(3, 32, kernel_size=(3,3), stride=(1,1), padding=(1,1))
    (1): ReLU()
    (2): Conv2d(32, 32, kernel_size=(4,4), stride=(2,2), padding=(1,1))
    (3): ReLU()
    (4): Conv2d(32, 64, kernel_size=(4,4), stride=(1,1), padding=(1,1))
    (5): ReLU()
    (6): Conv2d(64, 64, kernel_size=(4,4), stride=(2,2), padding=(1,1)),
    (7): torch.nn.ReLU()
    (8): Flatten(start_dim=1, end_dim=-1)
    (9): Linear(in_features=14400, out_features=1024, bias=True)
    (10): ReLU()
    (11): Linear(in_features=1024, out_features=1024, bias=True)
    (12): ReLU()
    (13): Linear(in_features=1024, out_features=2, bias=True)
)
\end{verbatim}

\noindent
\textbf{\large{ISIC.}}
\begin{verbatim}
Sequential(
    (0): Conv2d(3, 16, kernel_size=(4,4), stride=(2,2), padding=(0,0))
    (1): ReLU()
    (2): Conv2d(16, 32, kernel_size=(4,4), stride=(4,4), padding=(0,0))
    (3): ReLU()
    (4): Conv2d(32, 64, kernel_size=(4,4), stride=(1,1), padding=(1,1))
    (5): ReLU()
    (6): Flatten(start_dim=1, end_dim=-1)
    (7): Linear(in_features=43808, out_features=100, bias=True)
    (8): ReLU()
    (9): Linear(in_features=100, out_features=2, bias=True)
)
\end{verbatim}

\noindent
\textbf{\large{Plant Phenotyping.}}
\begin{verbatim}
Sequential(
    (0): Conv2d(3, 32, kernel_size=(3,3), stride=(1,1), padding=(1,1))
    (1): ReLU()
    (2): Conv2d(32, 32, kernel_size=(4,4), stride=(2,2), padding=(1,1))
    (3): ReLU()
    (4): Conv2d(32, 64, kernel_size=(4,4), stride=(1,1), padding=(1,1))
    (5): ReLU()
    (6): Conv2d(64, 64, kernel_size=(4,4), stride=(2,2), padding=(1,1)),
    (7): torch.nn.ReLU()
    (8): Flatten(start_dim=1, end_dim=-1)
    (9): Linear(in_features=173056, out_features=1024, bias=True)
    (10): ReLU()
    (11): Linear(in_features=1024, out_features=1024, bias=True)
    (12): ReLU()
    (13): Linear(in_features=1024, out_features=2, bias=True)
)
\end{verbatim}

\noindent
\textbf{\large{Salient ImageNet.}}

As the pre-trained backbone model, we use \textbf{ResNet-18} \citep{resnet}, to which we attach the following classification head: 
\begin{verbatim}
Sequential(
    (0): Identity()
    (1): Dropout(p=0.5)
    (2): Linear(in_features=512, out_features=6, bias=True)
)
\end{verbatim}

\noindent
\textbf{\large{Salient ImageNet.}}

As the pre-trained backbone model, we use \textbf{BERT} \citep{devlin2019bert}, to which we attach the following classification head: 
\begin{verbatim}
Sequential(
    (0): Linear(in_features=768, out_features=384, bias=True)
    (1): ReLU()
    (2): Linear(in_features=384, out_features=2, bias=True)
)
\end{verbatim}

\section{Extension of Gradient-Based Regularization Techniques to Language Modeling Tasks} \label{app_sec:extension_text}

\textbf{\RT.} This extension requires minimal modifications to the standard training pipeline. First, we identify the positions of spurious tokens within the input sequence. For each of these tokens, we extract their embedding, which includes both token and positional components. Lastly, we compute the gradient of the model’s loss with respect to these embeddings and augmenting the original loss function with the norm of the computed gradients. 

\textbf{Smooth-\RT} and \textbf{Rand-\RF.} In contrast to approaches that use continuous normal noise (e.g., Gaussian perturbations applied to image pixels), we propose a discrete perturbation mechanism suited to language. Specifically, we define a hyperparameter $\alpha \in [0,100]$ that controls the percentage of token substitutions applied to a given input text.

In Smooth-\RT, $\alpha \%$ tokens are randomly sampled from the entire input text and replaced with alternatives drawn uniformly at random from the full vocabulary. In Rand-\RF, replacements are restricted to a predefined set of spurious words (for example, the ones identified in the IMDB dataset \citep{imdb_dset} by \citet{lm_spur}), and substitutions are applied only to tokens in the input that match this spurious set.

This randomized substitution process is repeated $n$ times per input example, where $n$ is a user-defined sampling hyperparameter. For each perturbed sample, we compute the gradient of the loss function with respect to the token embeddings. Finally, the regularization term added to the original loss function is the mean of the gradients across the $n$ samples in the case of Smooth-\RT, and the element-wise maximum of the gradients across samples for Rand-\RF, respectively.

\textbf{Adv-\RF.} To enhance the adversarial regularization capabilities of Adv-RF, we draw methodological inspiration from the \textit{GCG} attack proposed by \citet{GCG}. This extension leverages gradient-based token substitution in an iterative manner to construct adversarial variants of input sequences.

The process begins by identifying an ordered sequence of spurious tokens present in the original input. For each token in this sequence, we perform a search over a predefined vocabulary of spurious words to identify the candidate replacement that maximizes the gradient of the model’s loss with respect to the embedding of the current token. Formally, for a given spurious token $w_i $ in the input, we select a replacement $w'_i \in \mathcal{S}$ such that:
\[
w'_i = \arg\max_{w \in \mathcal{S}} \|\nabla_{e(w)} \mathcal{L}(x^{(i \rightarrow w)})\|
\]
where $\mathcal{S}$ is the set of known spurious words, $e(w)$ is the embedding of token $w$, and $x^{(i \rightarrow w)}$ denotes the input sequence with the $i$-th token replaced by $w$.

After each substitution, the modified input sequence is updated, and the process is repeated for the next spurious token in the sequence. This greedy, iterative attack continues until all spurious tokens have been considered and replaced. The resulting adversarially perturbed sequence is then used to compute the regularization term added to the original loss function.

\section{Impact of Weight Decay} \label{app_sec:weight_decay}
As discussed in Section \ref{sec:experiments}, we opted to set the weight decay coefficient to be proportional to the ratio of masks when performing mask sample complexity ablations. However, prior to this experimental design decision, we set out to perform a vanilla ablation, leaving all the training parameters, including the ones pertaining to weight regularization, the same as in the case of the best performing model achieving the results show in Table \ref{tab:perf_r4}. However, this yielded surprising results, namely that all methods which used weight regularization varied insignificantly in worst (and macro average) group accuracy, regardless of the mask percentage available at training time. This, naturally, led us to hypothesize that in numerous previous MLX methods, either the weight regularization is the parameter that actually controls the robustness to spurious features the most, or, at the very least, it is indispensable in achieving previous literature results and \textit{must} be used in addition to the MLX objective. 

\begin{figure*}[h!]
    \centerline{%
        \minipage{1\textwidth}
            \includegraphics[width=0.55\textwidth]{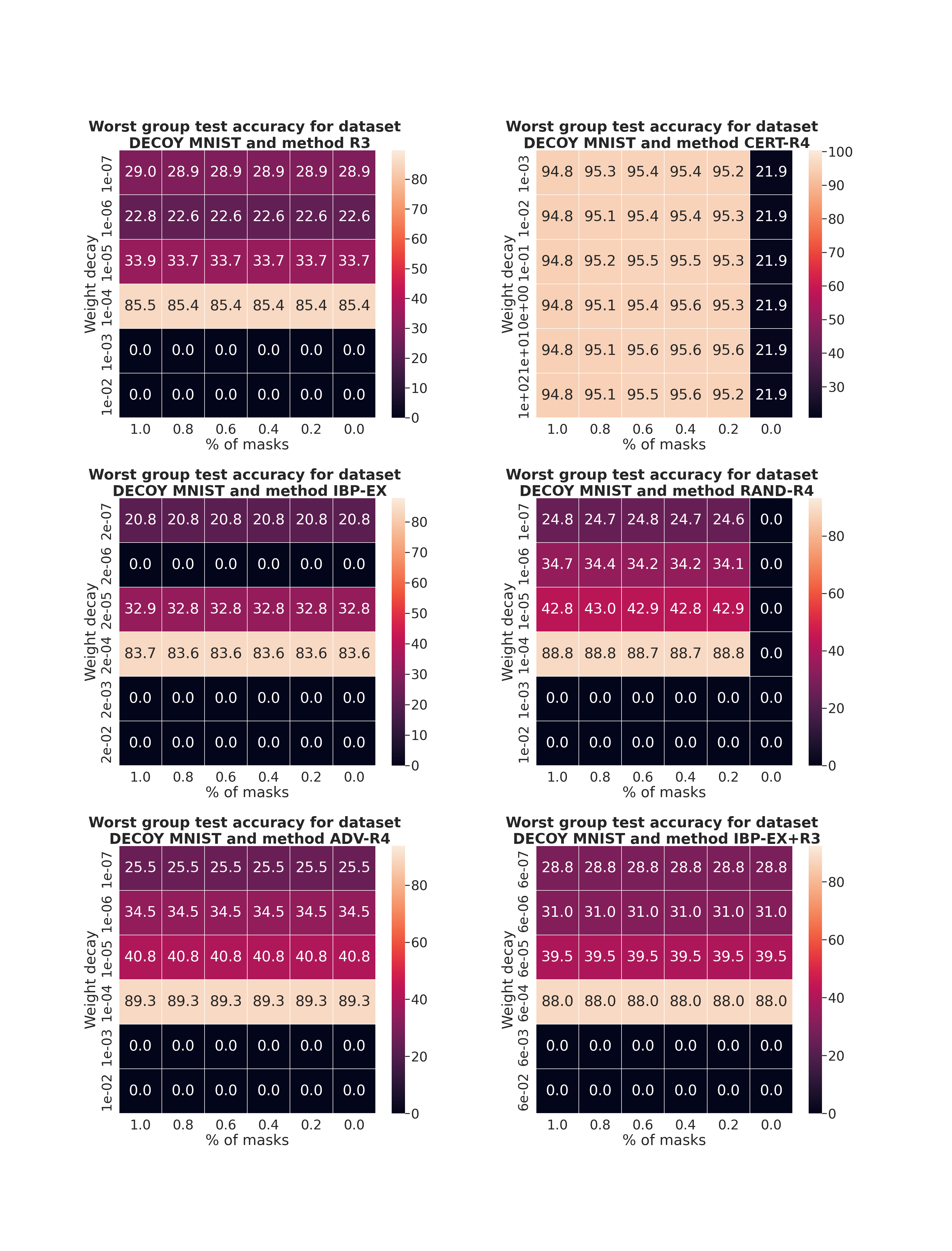}
            \hskip -0.3in
            \includegraphics[width=0.55\textwidth]{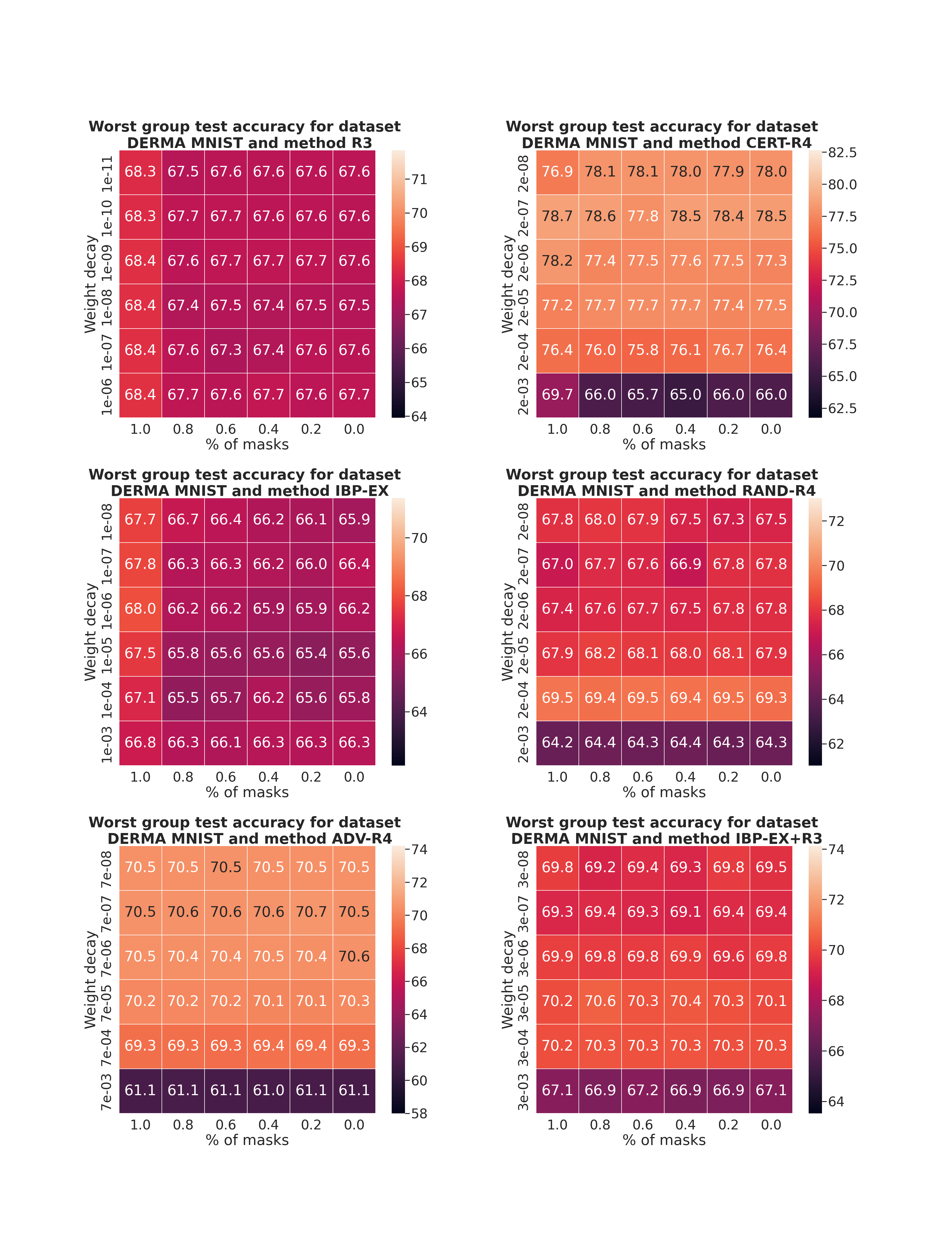}
        \endminipage\hfill
        }
    \caption{Sample complexity ablation varying the percentage of masks available at train time for different amounts of weight regularization. We show the results for DecoyMNIST in the two leftmost columns and the results for DecoyDERM in the rightmost two columns.}
    \label{fig:weight_decay_heatmaps}
\end{figure*}

To explore the connection between spurious feature reliance and weight regularization, we perform the same ablations as in Section \S\ref{sec:ablations} for different values of weight decay, and then plot the results as a heat map for each MLX technique, as shown in Figure \ref{fig:weight_decay_heatmaps}. We first note that the initial weight decay parameter used corresponds to the y-axis label of the fourth row, for every heatmap. Starting with the results on DecoyMNIST (first and second columns starting from the left), we observe that for non-{\RF} methods, the performance of the algorithm quickly decreases as weight decay strays away from its initial value, where the best performance, or ``mode`` is achieved. Adv-{\RF}'s behaviour is similar, but the effect is slightly weaker. Notably, the accuracy of all these four methods does not vary \textit{at all} along the x-axis, suggesting that independence from spurious features is highly correlated with weight decay, but less with the mask ratio. Lastly, Cert-{\RF} (second top-most plot), which we highlight is trained on this dataset for performance \textbf{without any weight regularization} has little variation in group accuracy regardless of the $\ell_2$ regularization magnitude due to the strength of the MLX term used in training. The only parameter space location where its accuracy collapses is when we do not have access to any masks whatsoever, which uncovers an interesting behaviour: weight decay does not influence it at all, since the performance in the middle row (weight decay 0) is the same as any other row.

The heat maps for DecoyDERM are slightly harder to interpret, because it is (i) a much more complex dataset and (b) a harder MLX objective (regions taken up by swatches are larger). For the majority of methods, the ablations performed on this dataset do indeed show a decrease, albeit slight, along the x axis (i.e. varying the mask ratio), showing that in a real-world scenario, as complexity of the task increases, so does the importance of high-quality masks. Turning back to the impact of weight decay, we can see that especially in methods such as {\RT} or IBP-Ex+{\RT} it plays a huge role, noticeable from the little variation across both x and y axis, whereas in, for example, Cert-{\RF} (top-right corner), the accuracy seems to be varying more. Lastly, one thing that can equally be learned equally from all algorithms used to ensure robustness to spurious features in DecoyDERM is that for complex datasets weight decay is, next to the strength given to MLX regularization, one of the most critical parameters. As such, practicians must make sure not to over, nor under-regularize with respect to weights, because the large number of model parameters in certain architectures (especially large convolutional ones) might determine the weight decay term to far exceed the MLX one and thus determine the optimizer to ignore the latter, or viceversa, might make the objective impossible to optimize due to the difficulty of the MLX term.

\section{Model Size Ablations} \label{app_sec:model_size_ablations}

\begin{figure*}[h!]
    \centerline{%
        \minipage{1\textwidth}
            \includegraphics[width=0.5\textwidth]{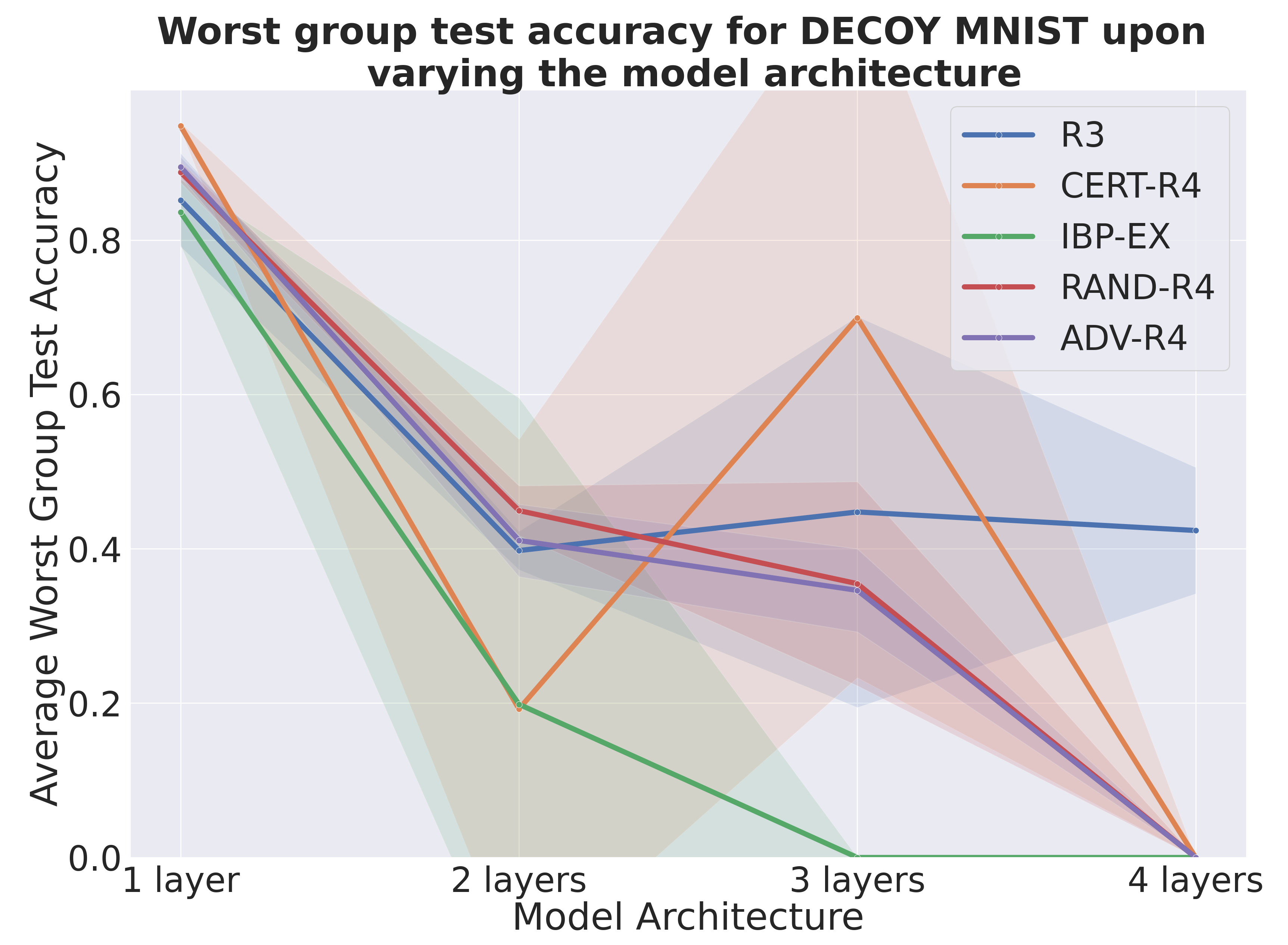}
            \includegraphics[width=0.5\textwidth]{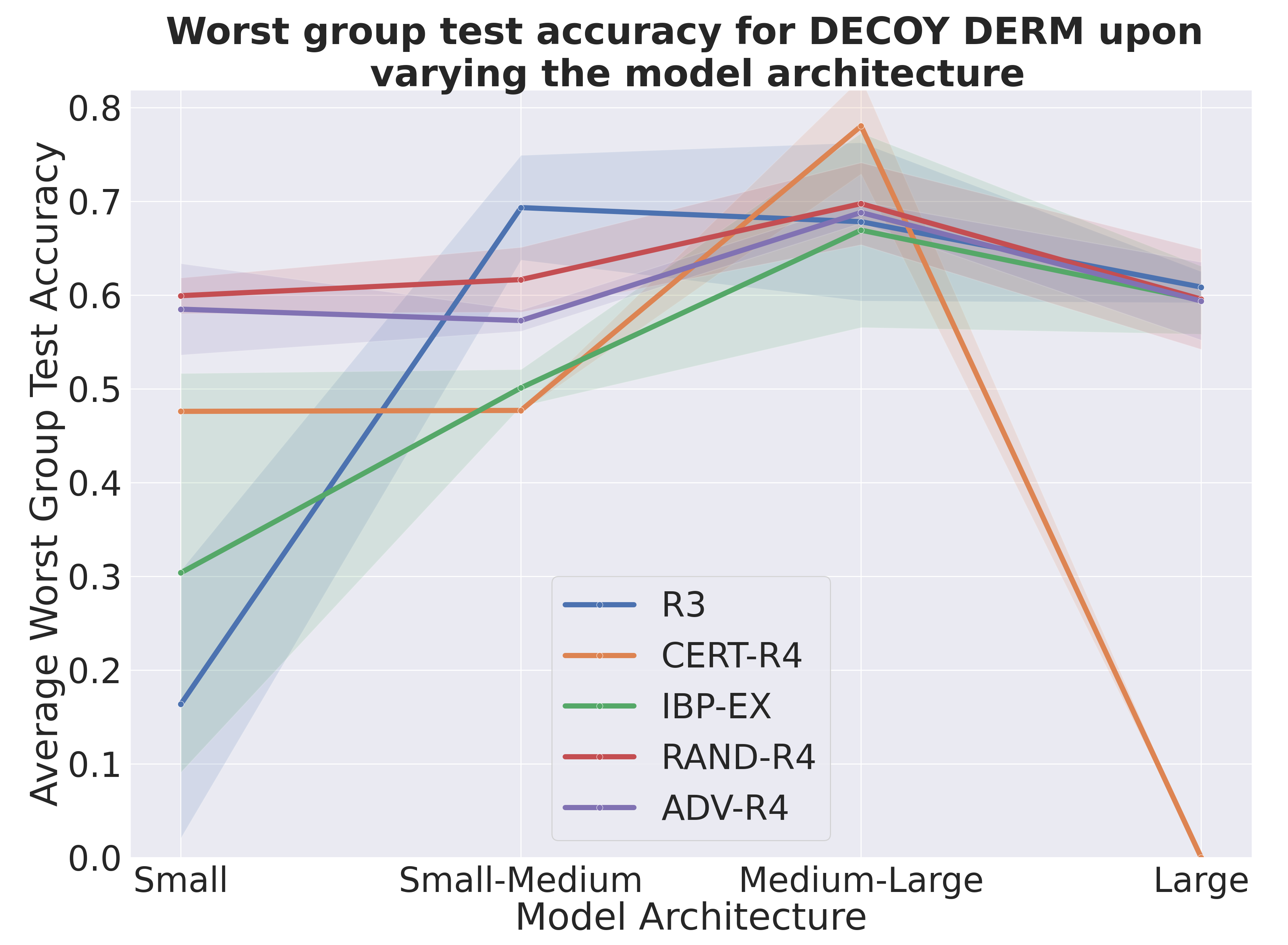}
        \endminipage\hfill
        }
    \centerline{%
        \minipage{1\textwidth}
            \includegraphics[width=0.5\textwidth]{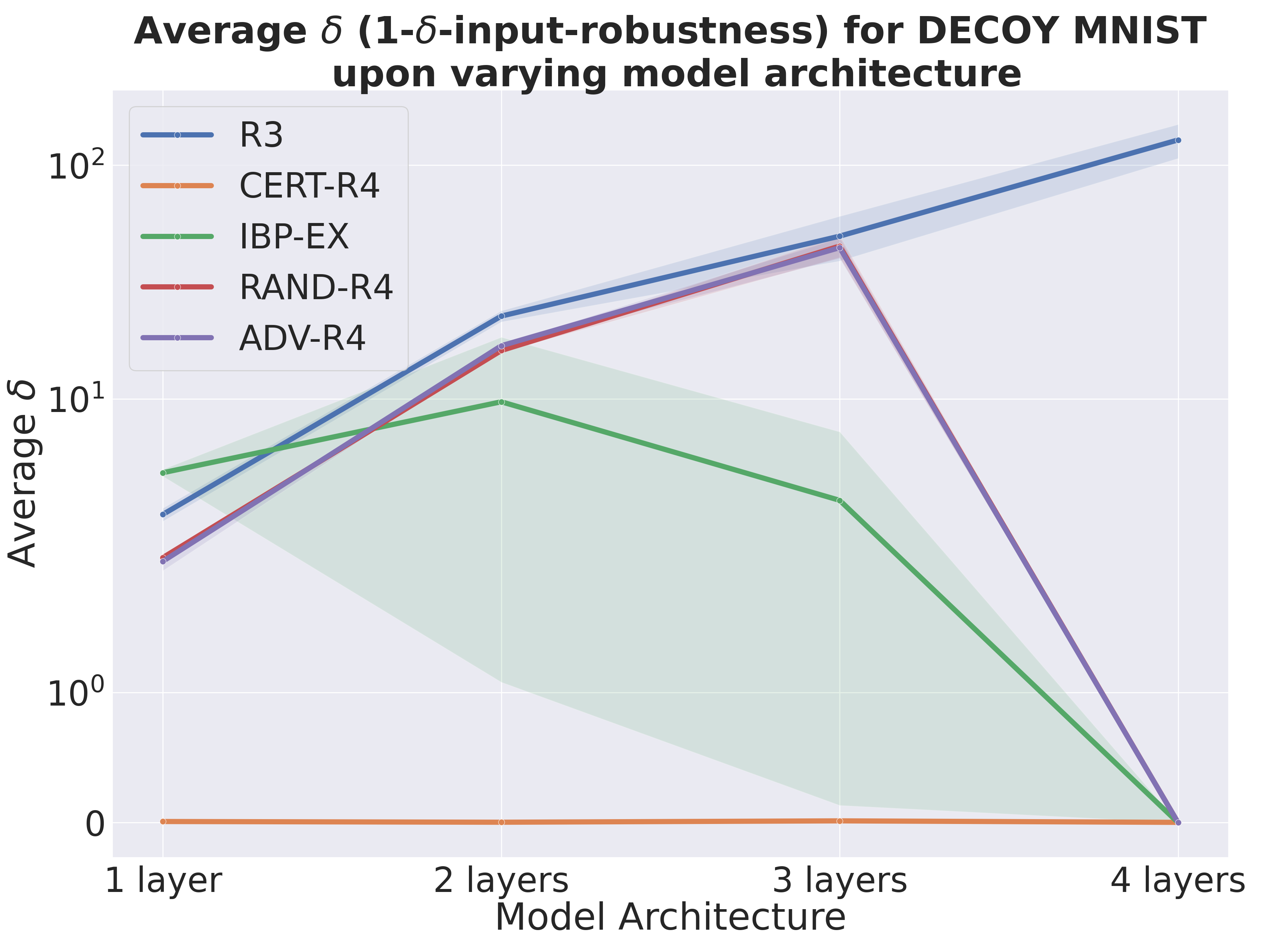}
            \includegraphics[width=0.5\textwidth]{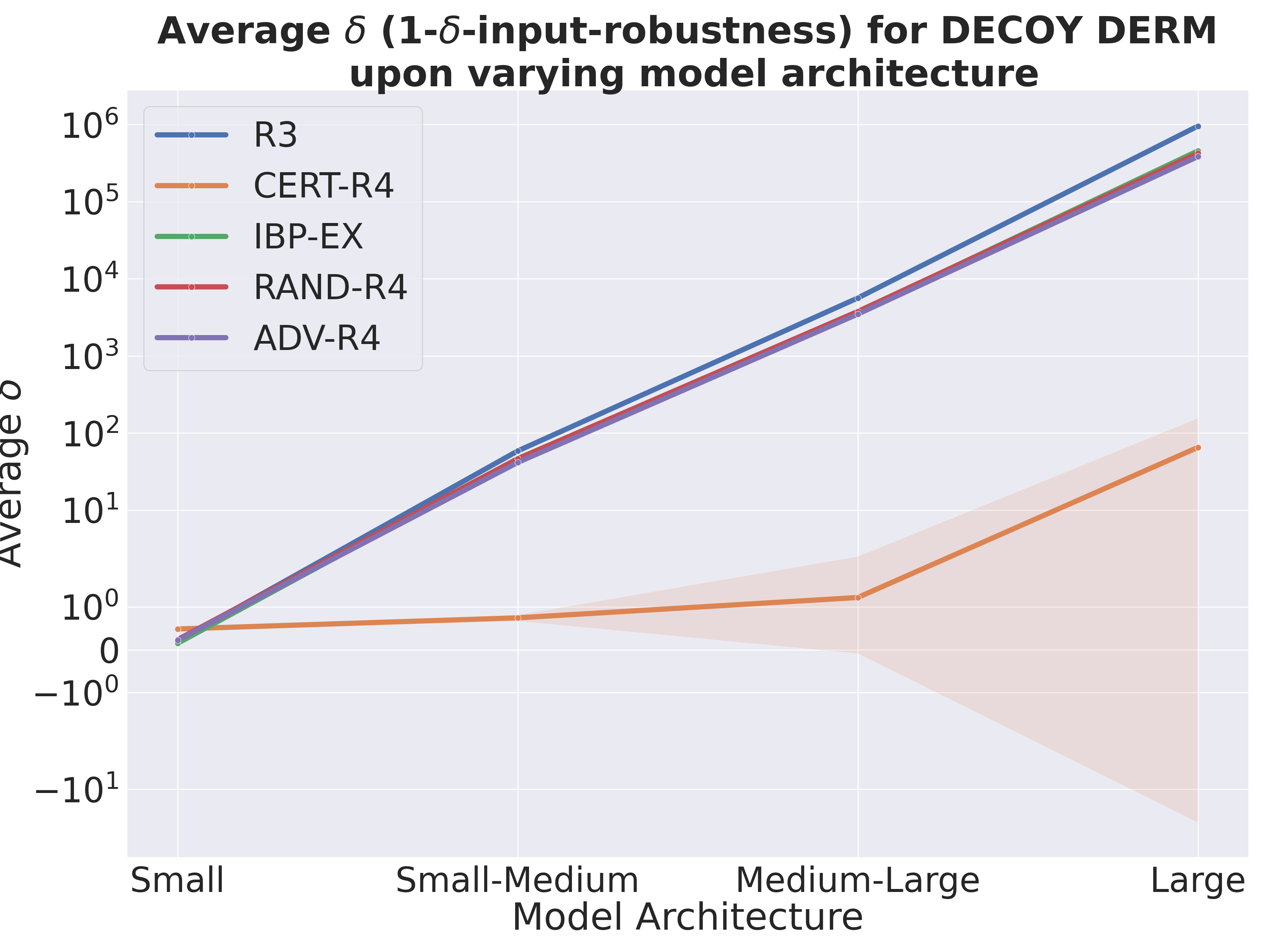}
        \endminipage\hfill
        }
    \caption{Model size ablation varying the number of layers in the model architecture at train time. We plot worst case group accuracy (top) and explanation fragility (bottom) for DecoyMNIST (\textbf{left column}) and  DecoyDERM (\textbf{right column}).}
    \label{fig:model_size_ablation}
\end{figure*}

 We also perform ablations of model architecture sizes, by varying the number of layers at train time for DecoyMNIST and DecoyDERM, as can be seen in Figure \ref{fig:model_size_ablation}. For the first, the performance results obtained in Table \ref{tab:perf_r4} are achieved by using a 1 layer fully-connected network with 512 hidden units (corresponding to the '1-layer' x-axis label), and we vary the number of layers, while maintaining the amount of hidden units in each layer, which is to say we ablate the depth. For the latter, the performance results are obtained with the 'Medium-Large' architecture, consisting of 4 convolutional layers and 3 fully-connected layers, and we perform the ablation by adding or removing 1 convolutional layer and 1 fully connected layer per x axis step.  
 
 Looking at the top row, depicting the worst group accuracy per model architecture for different MLX techniques in both datasets, we can observe that the statistical and first-order adversarial {\RF} approaches yield improved generalization when compared to IBP-Ex and {\RT} for different architectures, being in some cases 40\% more worst group accurate (for example at the '3 layers' x label value). Reassuringly, for complex architectures as is the case with DecoyDERM, both the variations between subsequent architectures and their respective standard deviation (measured across runs) stay low, property which might be desirable for end users when deployed in critical systems. One interesting behaviour emerges, though, in Cert-{\RF}: it achieves significantly higher accuracy by using the initial performance-optimized architecture. When varied, the accuracy drops rapidly, suggesting that Cert-{\RF} requires a very specific set of parameters to function as expected, and thus might need re-optimizing when a certain parameter changes. This effect is compounded with the usage of convex relaxation techniques (i.e. IBP), which are know to generalize poorly to large architectures, achieving vacuous bounds, and the complexity of the task, which requires a large number of parameters in order to be expressive. However, lastly, a positive behaviour we can notice is the fact that the explanation fragility remains constantly very close to 0, meaning that even if we might be independent to a subset of spurious features, we are \textit{very sure} this particular algorithms will never be fooled by noise added to those features. This leads us to think that there is a trade-off between explanation fragility and group accuracy (in Cert-{\RF}), and coming up with a way of dynamically loosening the $\delta$-input-robustness guarantees during training might yield improved worst-group accuracy.

 %We highlight that the degredation of IBP-based methods for deep networks is not a novel insight to this work and has been observed across its use cases in robustness verification \cite{gowal2018effectiveness}, Bayesian neural networks, and reinforcement learning as well as in other works that seek to bound the gradient magnitude change w.r.t. changes in the input e.g., in explainability, poisoning robustness, and differential privacy. 

\section{Additional Mask Corruption Ablations} \label{app_sec:mask_corr_extra}

In Figure \ref{fig:additional_mask_corr} we reuse the mask corruption ablation experimental setup of Figure \ref{fig:ablations} and plot the worst group accuracy and gradient fragility ($\delta$) when training a model on DecoyMNIST using \RT and, respectively, IBP-Ex. Firstly, we can immediately notice that as the corruption ratio at train time increases, each method's performance decreases at almost exactly the same rate across the corruption type applied, a plausible explanation for these dynamics being the reliance of the methods on weight regularization (as was demonstrated in Appendix \ref{app_sec:weight_decay}). Being corruption-agnostic suggests that previous MLX techniques are less expressive and flexible compared \RF, failing to exploit fine-grained spatial properties of annotations and having a coarse train-induced mechanism to mitigating shortcut learning. Secondly, both \RT and IBP-Ex perform worse than Cert-\RF when the mask corruption applied is either `shrink`, `shift` or `misposition`, and slightly better when it is `dilation`, behaviour that can be explained by the strong regularization magnitude of \RF induced by the maximization term in ~\eqref{eq:R4}. Lastly, we observe that as the ratio of corrupted annotations increases, the gradient fragility of both techniques has a large enough magnitude such that we can consider that for all practical purposes, \RT and IBP-Ex have zero robustness to perturbations of the shortcut features, like those introduced by \cite{dombrowski2019explanations}.

\begin{figure*}[h!]
    \centerline{%
        \minipage{1\textwidth}
            \includegraphics[width=1.0\textwidth]{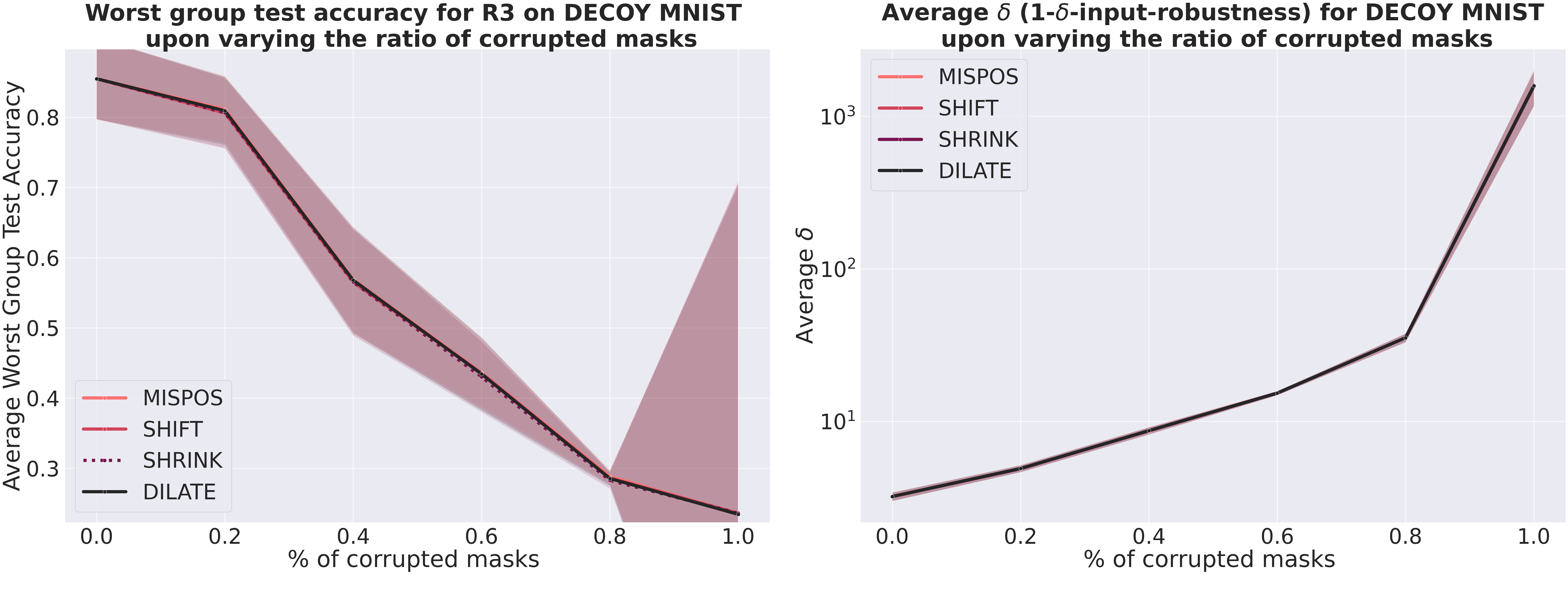}
        \endminipage\hfill
        }
    \centerline{%
        \minipage{1\textwidth}
            \includegraphics[width=1.0\textwidth]{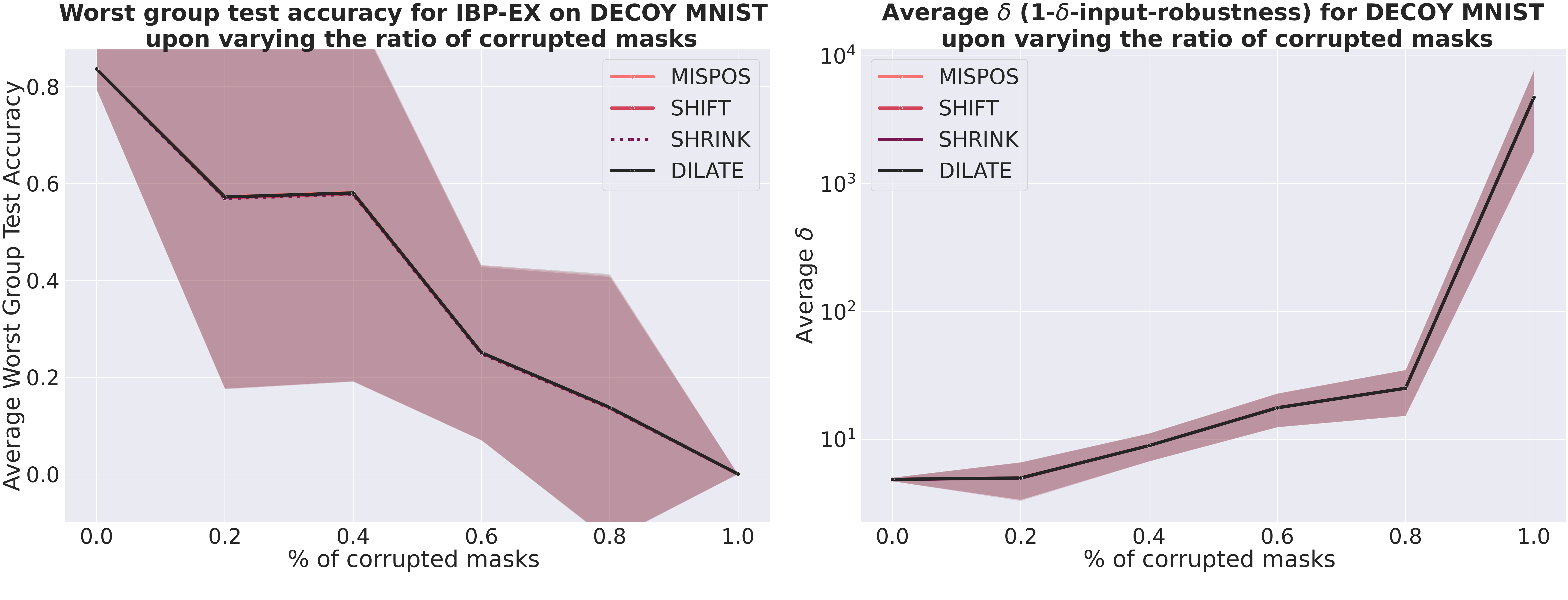}
        \endminipage\hfill
        }
    \caption{Mask corruption ablations using the experimental setup in Figure \ref{fig:ablations}. We plot worst case group accuracy (left column) and explanation fragility (right column) for DecoyMNIST using \RT \: (top row) and IBP-Ex (bottom row).
    \label{fig:additional_mask_corr}}
\end{figure*}

\section{Additional Experimental Details} \label{app:exp_details}

\begin{table}[htbp]
\begin{center}
\caption{Robustness to Perturbation of Spurious Features}
\vskip 0.12in
\makebox[\linewidth][c]{%
\resizebox{\textwidth}{!}{%
\begin{tabular}{|c|c|c|c|c|c|c|c|c|}
\cline{2-9}
\multicolumn{1}{c|}{\textbf{ }} & \multicolumn{8}{|c|}{\textbf{Learning Objective}} \\
\cline{2-9}
\multicolumn{1}{c|}{\textbf{ }} &
\multicolumn{1}{|c|}{\shortstack{\textbf{ } \\ ERM}} &
\multicolumn{1}{|c|}{\shortstack{\textbf{ } \\ {\RT}}} &
\multicolumn{1}{|c|}{\shortstack{\textbf{ } \\ Smooth-{\RT}}} &
\multicolumn{1}{|c|}{\shortstack{\textbf{ } \\ IBP-Ex}} &
\multicolumn{1}{|c|}{\shortstack{\textbf{ } \\ IBP-Ex + {\RT}}} &
\multicolumn{1}{|c|}{\shortstack{\textbf{ } \\ Rand-{\RF}}} &
\multicolumn{1}{|c|}{\shortstack{\textbf{ } \\ Adv-{\RF}}} 
& \multicolumn{1}{|c|}{\shortstack{\textbf{ } \\ Cert-{\RF}}} \\  
\hline
\shortstack{\textbf{Dataset} $\downarrow$} &
\textbf{Avg} $\mathbf{\delta}$ &
\textbf{Avg} $\mathbf{\delta}$ &
\textbf{Avg} $\mathbf{\delta}$ &   
\textbf{Avg} $\mathbf{\delta}$ &
\textbf{Avg} $\mathbf{\delta}$ &
\textbf{Avg} $\mathbf{\delta}$ &
\textbf{Avg} $\mathbf{\delta}$ &
\textbf{Avg} $\mathbf{\delta}$ \\
\hline
DecoyMNIST & 497.44 & 4.35 & 4.33 & 5.61 & 3.43 & 2.10 & 2.02 & \textbf{0.93} \\
\hline
DecoyDERM & 6742.41 & 5762.81 & 3906.13 & 2979.48 & 2892.32 & 3790.20 & 3504.75 & \textbf{1.297} \\
\hline
ISIC & 0.423 & \textbf{0.00167} & 1.029 & 1.784 & 2.878 & 1.392 & 0.400 & 0.901 \\
\hline
Plant & 4011.40 & 4519.85 & 2082.06 & 4742.33 & 1105.28 & 1285.71 & 1787.99 & \textbf{89.38} \\
\hline
\end{tabular}
}}
\label{tab:spur_rob_r4}
\end{center}
\end{table}

\begin{table}[htbp]
\begin{center}
\caption{Robustness to Perturbation of Core Features}
\vskip 0.12in
\makebox[\linewidth][c]{%
\resizebox{\textwidth}{!}{%
\begin{tabular}{|c|c|c|c|c|c|c|c|c|}
\cline{2-9}
\multicolumn{1}{c|}{\textbf{ }} & \multicolumn{8}{|c|}{\textbf{Learning Objective}} \\
\cline{2-9}
\multicolumn{1}{c|}{\textbf{ }} &
\multicolumn{1}{|c|}{\shortstack{\textbf{ } \\ ERM}} &
\multicolumn{1}{|c|}{\shortstack{\textbf{ } \\ {\RT}}} &
\multicolumn{1}{|c|}{\shortstack{\textbf{ } \\ Smooth-{\RT}}} &
\multicolumn{1}{|c|}{\shortstack{\textbf{ } \\ IBP-Ex}} &
\multicolumn{1}{|c|}{\shortstack{\textbf{ } \\ IBP-Ex + {\RT}}} &
\multicolumn{1}{|c|}{\shortstack{\textbf{ } \\ Rand-{\RF}}} &
\multicolumn{1}{|c|}{\shortstack{\textbf{ } \\ Adv-{\RF}}} 
& \multicolumn{1}{|c|}{\shortstack{\textbf{ } \\ Cert-{\RF}}} \\  
\hline
\shortstack{\textbf{Dataset} $\downarrow$} &
\textbf{Avg} $\mathbf{\delta}$ &
\textbf{Avg} $\mathbf{\delta}$ &
\textbf{Avg} $\mathbf{\delta}$ &   
\textbf{Avg} $\mathbf{\delta}$ &
\textbf{Avg} $\mathbf{\delta}$ &
\textbf{Avg} $\mathbf{\delta}$ &
\textbf{Avg} $\mathbf{\delta}$ &
\textbf{Avg} $\mathbf{\delta}$ \\
\hline
DecoyMNIST & 332.3 & 7.7 & 7.6 & 8.7 & 6.4 & 6.2 & 6.3 & 39.4 \\
\hline
DecoyDERM & 7918.7 & 5811.9 & 3925.1 & 3714.5 & 3684.6 & 3800.2 & 3558.9 & 453.7 \\
\hline
ISIC & 0.45 & 0.002 & 1.17 & 1.92 & 3.06 & 1.6 & 0.45 & 1.07 \\
\hline
Plant & 4410.9 & 4657.3 & 2034.7 & 4881.83 & 1223.02 & 1307.2 & 2439.8 & 97.8 \\
\hline
\end{tabular}
}}
\label{tab:core_rob_r4}
\end{center}
\end{table}

\section{Model Outputs Upper Bound for Adversarially Perturbed Spurious Regions - Proof}\label{app:proof}
We firstly restate our claim and then provide a proof.
\begin{prop}
Given a function $f$ induced by a model parametrized by $\theta$, trained on dataset $\mathcal{D} = \{ (\bm{x}^{(i)}, \bm{y}^{(i)}, \bm{m}^{(i)}) \}_{i=1}^{N}$ with loss function \RF \textbf{ }as in \eqref{eq:R4}, adversarial perturbations of magnitude at most $\epsilon$ (i.e. with $x' \in \mathcal{B}_{\epsilon}^{\bm{m}}(x)$) and post-hoc computed $\delta$-input robustness of magnitude at most $\delta^*$, then the output difference norm between any perturbed input and its standard counterpart is upper bounded by:
\begin{align*}
    \|f^{\bm{\theta}}(\bm{x}) - f^{\bm{\theta}}(\bm{x}')\| \leq \delta^\star \|\epsilon\|(1 + \frac{1}{2}\|\epsilon\|).
\end{align*}
\end{prop}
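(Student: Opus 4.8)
The plan is to carry out the second-order Taylor argument sketched in \S\ref{subsec:theory} with every inequality made explicit, working with a generic clean point $\bm{x}$, mask $\bm{m}$, and perturbed point $\bm{x}' \in \mathcal{B}_\epsilon^{\bm{m}}(\bm{x})$ (the index $i$ and the specific role of the {\RF} objective are irrelevant here beyond guaranteeing the hypothesis \eqref{eq:norm_bound}). First I would expand $f^{\bm{\theta}}$ to second order about $\bm{x}'$ along the masked coordinates, i.e.\ use $f_{\bm{m},2}^{\bm{\theta}}$ from \eqref{eq:taylorapprox}, and subtract its value at $\bm{x}'$, at which the expansion collapses to $f^{\bm{\theta}}(\bm{x}')$. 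Writing $\bm{v} = \bm{m} \odot (\bm{x} - \bm{x}')$ this gives
\begin{align*}
f_{\bm{m},2}^{\bm{\theta}}(\bm{x}) - f_{\bm{m},2}^{\bm{\theta}}(\bm{x}') = \nabla_{\bm{x}} f^{\bm{\theta}}(\bm{x}')^\top \bm{v} + \tfrac12\, \bm{v}^\top H_{\bm{x}'}(f^{\bm{\theta}})\, \bm{v},
\end{align*}
and taking the norm and applying the triangle inequality separates the estimate into a linear and a quadratic term.

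For the linear term I would move the Hadamard factor onto the gradient, $\nabla_{\bm{x}} f^{\bm{\theta}}(\bm{x}')^\top \bm{v} = \bigl(\bm{m} \odot \nabla_{\bm{x}} f^{\bm{\theta}}(\bm{x}')\bigr)^\top (\bm{x} - \bm{x}')$, apply Cauchy--Schwarz, and then invoke \eqref{eq:norm_bound} at $\bm{x}^\star = \bm{x}'$ to bound it by $\delta^\star \|\bm{x} - \bm{x}'\|$. For the quadratic term I would use the operator-norm inequality $|\bm{v}^\top H \bm{v}| \le \|H\|\,\|\bm{v}\|^2$ together with the claim made just above \eqref{eq:strictbound} that the gradient being bounded by $\delta^\star$ throughout the $\epsilon$-ball also forces $\|H_{\bm{x}'}(f^{\bm{\theta}})\| \le \delta^\star$, bounding it by $\tfrac12 \delta^\star \|\bm{v}\|^2$. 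Since $\bm{m} \in [0,1]^n$ we have $\|\bm{v}\| \le \|\bm{x} - \bm{x}'\|$, and since $\bm{x}' \in \mathcal{B}_\epsilon^{\bm{m}}(\bm{x})$ we have $\|\bm{x} - \bm{x}'\| \le \|\bm{m} \odot \epsilon\| \le \|\epsilon\|$; combining the two pieces gives $\|f_{\bm{m},2}^{\bm{\theta}}(\bm{x}) - f_{\bm{m},2}^{\bm{\theta}}(\bm{x}')\| \le \delta^\star \|\epsilon\| + \tfrac12 \delta^\star \|\epsilon\|^2 = \delta^\star \|\epsilon\|\bigl(1 + \tfrac12 \|\epsilon\|\bigr)$. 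To lift this from the truncated expansion to the exact function, I would invoke Taylor's theorem with Lagrange remainder: along the segment from $\bm{x}'$ to $\bm{x}$, which stays within $\mathcal{B}_\epsilon^{\bm{m}}(\bm{x}')$, the exact increment equals the linear term plus $\tfrac12 \bm{v}^\top H_{\bm{\zeta}}(f^{\bm{\theta}}) \bm{v}$ for some $\bm{\zeta}$ on that segment, and since the gradient and hence Hessian bounds apply at $\bm{\zeta}$ as well, the same estimate holds exactly, with all higher-order contributions absorbed into the $\delta^\star \|\epsilon\|^2 / 2$ term; this is what upgrades the $\lessapprox$ of \S\ref{subsec:theory} to a genuine $\le$.

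The step I expect to be the main obstacle is $\|H_{\bm{x}'}(f^{\bm{\theta}})\| \le \delta^\star$: a uniform bound on the gradient over the $\epsilon$-ball does not, on its own, bound the Hessian (the Jacobian of the gradient map), so the honest route is either to fold it into the hypothesis --- letting $\delta^\star$ denote the common bound on both the gradient magnitude and the Hessian operator norm throughout $\mathcal{B}_\epsilon^{\bm{m}}(\bm{x}')$, consistent with how $\delta$-input-robustness is used elsewhere in the paper --- or to bound $\|H_{\bm{x}'}(f^{\bm{\theta}})\|$ by a second directional difference quotient of $f^{\bm{\theta}}$ that is itself controlled by the gradient bound. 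I would state this dependence up front so that the remainder of the argument is just Cauchy--Schwarz, the triangle inequality, the operator-norm inequality, and $\bm{m} \in [0,1]^n$.
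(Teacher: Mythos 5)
Your proposal follows essentially the same route as the paper's own proof in Appendix \ref{app:proof}: a Taylor expansion of $f^{\bm{\theta}}$ about $\bm{x}'$ restricted to the masked directions, the triangle inequality to split the linear and remainder terms, the hypothesis \eqref{eq:norm_bound} to bound the linear term by $\delta^\star\|\bm{x}-\bm{x}'\|$, Taylor's theorem with Lagrange remainder to control the higher-order contribution, and $\|\bm{m}\odot(\bm{x}-\bm{x}')\|\leq\|\epsilon\|$ to finish. The one step you flag as the main obstacle --- that a uniform gradient bound on the $\epsilon$-ball yields $\|H_{\bm{x}'}(f^{\bm{\theta}})\|\leq\delta^\star$ --- is exactly the step the paper handles by bare assertion (``then trivially the norm of the Hessian \dots is also bounded by $\delta^\star$''), so your instinct to make that an explicit hypothesis rather than a consequence is a strictly more honest rendering of the same argument.
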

\begin{proof}
We begin by writing the first-order Taylor approximation, with second-order remainder term at the spurious (masked) region of input $x'$, i.e. $\mathcal{B}_{\epsilon}^m(\bm{x}')$. In that case, the function $f_{\bm{m},2}^{\bm{\theta}}(\bm{x})$ is \textit{exactly equal to}:
\begin{align*} 
f_{m}^{\bm{\theta}}(\bm{x}) = &f^{\bm{\theta}}(\bm{x}') + \underbrace{\nabla_x f^{\bm{\theta}}(\bm{x}')^\top (\bm{m} \odot (\bm{x} - \bm{x}'))}_{\text{function change}} +\underbrace{\mathcal{R}_2(f_m^{\bm{\theta}}(\bm{x}))}_{\text{second-order remainder}}\nonumber
\end{align*}
By rearranging and applying the norm on both sides, we have: 
\begin{align*} 
\|f_{m}^{\bm{\theta}}(\bm{x}) - f^{\bm{\theta}}(\bm{x}')\| = \| \nabla_x f^{\bm{\theta}}(\bm{x}')^\top (\bm{m} \odot (\bm{x} - \bm{x}'))
+ \mathcal{R}_2(f_m^{\bm{\theta}}(\bm{x}')) \|
\end{align*}
We note that, as mentioned in  \S\ref{subsec:theory}, by optimizing {\RF}'s objective, we can bound the gradient magnitude for any point in the $\epsilon$-ball of the masked region: $\forall \bm{x}^\star \in \mathcal{B}^{m}_{\epsilon}(\bm{x}'), \|\nabla_x f^{\bm{\theta}}(\bm{x}^\star)\| \leq \delta^\star$. Using the property given by our technique, as well as the triangle inequality, we have that: 
{
\begin{align*} \|f^{\bm{\theta}}(\bm{x}) - f^{\bm{\theta}}(\bm{x}')\| \leq 
\delta^\star \|\bm{m} \odot (\bm{x} - \bm{x}')\| + \|\mathcal{R}_2(f_m^{\bm{\theta}}(\bm{x}')) \|
\end{align*}
}

Firstly, we remind the reader of \textit{Taylor's Theorem with Lagrange Remainder}. We write the $n$-th order Taylor's approximation for a function $f$ at a point $x' \in [x, x + h]$ and assume  that $f$ is $n + 1$ differentiable in this interval. The theorem tells us that the norm of the remainder term of order $n+1$ is upper bounded by:
\begin{align*}
    \|\mathcal{R}_{n + 1}(f(\bm{x}))\| \leq \frac{M}{(n+1)!}\|h\|, \, \text{ where }\, M = sup_{x' \in [x, x+h]}(\nabla_x^{n+1}f(\bm{x}))
\end{align*}

In our case, we have by definition that $x' \in \mathcal{B}_{\epsilon}^m(\bm{x})$, which implies $\|\bm{x} - \bm{x}'\| \leq \|\bm{m} \odot \epsilon\| \leq \|\epsilon\|$, since $m \in [0,1]$. We also note that $\|h\| = \|\bm{m} \odot (\bm{x} - \bm{x}')\|$ in the above equation. Lastly, as mentioned in \S\ref{subsec:theory}, because the norm of the gradient in the masked $\epsilon$-ball is bounded by $\delta^*$, then trivially the norm of the Hessian in the same region is also bounded by $\delta^*$ and corresponds with the notion of $M$ defined above, for a first-order Taylor-approximation with second-order remainder. Lastly, since the matrix norm is submultiplicative, we obtain the following \textbf{strict upper bound}: 
\begin{align*}
    \|f^{\bm{\theta}}(\bm{x}) - f^{\bm{\theta}}(\bm{x}')\| \leq \delta^\star \|\epsilon\|(1 + \frac{1}{2}\|\epsilon\|).
\end{align*}
\end{proof}
\section{Why is {\RF} better than IBP-Ex?}\label{app:IBPComp}
The following result demonstrates the merit of {\RF} over IBP-Ex. 
\begin{prop}
Given a dataset $\mathcal{D}=\{{\bm x}^{(i)}, y^{(i)}\}_{i=1}^n$, we wish to fit a linear regressor that predicts the output well without using the pre-specified $k$ irrelevant features (of the total $l$ features), which we assume are the last k dimensions of ${\bm x}$ without loss of generality. We denote by the weights fitted using IBP-Ex and {\RF} by ${\bm w}_{I}$ and ${\bm w}_{R}$ respectively. The following result holds for the norm of irrelevant features.
\begin{align*}
    &\|{\bm w}_I\odot {\bm m}\| \leq \sqrt{k}\tau/\|\epsilon\|, \quad\|{\bm w}_R\odot {\bm m}\| \leq \tau\\
    &\text{where } {\bm m}=[\overbrace{0, 0, \dots, \underbrace{1,1,\dots}_{k}}^l]\text{ and } \tau\triangleq \max_{\bm x\in \mathcal{D},\bm{x}'\in B^\epsilon(\bm{x})} |\bm{w}_*^T\bm{x}' - \bm{w}_*^T\bm{x}|
\end{align*}
We require that the norm of the irrelevant features to be as small as possible. However, the bound on the norm of irrelevant features given by IBP-Ex is much weaker than that of {\RF}. Because the upper bound given by IBP-Ex gets weaker with the number of irrelevant features and is further exacerbated by the practically small value of $\epsilon$. On the other hand, {\RF} fitted $\bm{w}_R$ bounds the norm of irrelevant features well even when their dimension is high. Intuitively, the error in suppressing the function deviation of the inner maximization loop  blows up with the number of dimensions in IBP-Ex. On the other hand, since {\RF} suppresses the norm directly, the support of irrelevant features is kept in check. 
\begin{proof}
We will generalize the definition of $\tau$ to work for both IBP-Ex or {\RF} by generalizing the function ${\bm w}^T{\bm x}$ to $f(\bm{x})$. The more general definition of $\tau$ is as follows. 
\begin{equation*}
\tau\triangleq \max_{\bm x\in \mathcal{D},\bm{x}'\in B^\epsilon(\bm{x})} |f(\bm{x'}) - f(\bm{x})|
\end{equation*}
We first prove the result for $\bm{w}_I$, which bounds the function value deviation in the $\epsilon$-neighborhood. 

Since we are fitting a linear regressor, the coefficients of the $i^{th}$ irrelevant feature\\
$w_i \text{ is } \{f(\bm{x} + [0, \dots, 0,\underbrace{\|\epsilon\|}_{i^{th}}, 0,\dots, 0]) - f(\bm{x})\}/\|\epsilon\| \leq \tau/\|\epsilon\|$. Therefore, the norm of the $k$ irrelevant features is bounded by $\sqrt{k}\tau/\|\epsilon\|$. 

The result for $\bm{w}_R$ follows directly from observing that {\RF} minimizes the gradient norm of irrelevant features, which is $\bm{w}_R\odot \bm{m}$, and from the generalized definition of $\tau$.
\end{proof}
\label{prop:1}
\end{prop}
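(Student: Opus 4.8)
The plan is to specialize both the {\RF} and the IBP-Ex regularizers to the linear hypothesis class $f_{\bm w}(\bm x)=\bm w^\top \bm x$, read off exactly what quantity each one controls, and then convert that into a bound on $\|\bm w\odot\bm m\|$.

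The one structural fact that drives everything: for a linear model the input gradient is the constant vector $\nabla_{\bm x}f_{\bm w}\equiv\bm w$, so the ``robustly right reason'' term of {\RF}, namely $\max_{\|\bm\xi\|\le\|\epsilon\|}\|\bm m\odot\nabla_{\bm x}f_{\bm w}(\bm x+\bm m\odot\bm\xi)\|$, is identically $\|\bm m\odot\bm w_R\|$ --- it literally equals the target quantity. The IBP-Ex regularizer, by contrast, is only a scalar: it penalizes the worst-case \emph{output} deviation $\max_{\|\bm\xi\|\le\|\epsilon\|}|f_{\bm w}(\bm x+\bm m\odot\bm\xi)-f_{\bm w}(\bm x)|$ over masked perturbations. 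Writing $\tau$ for the level to which the relevant regularizer is driven at the fitted optimum --- a quantity that is finite because the reference predictor $\bm w_*$ fits the data and is therefore a feasible competitor whose masked-direction output deviation is at most $\tau$ by the definition in the statement --- the {\RF} bound is then immediate: the trained {\RF} regularizer \emph{is} $\|\bm w_R\odot\bm m\|$, so $\|\bm w_R\odot\bm m\|\le\tau$ with no further argument.

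For $\bm w_I$ the work is to unpack a scalar guarantee coordinate by coordinate. I would probe with the $k$ axis-aligned masked perturbations $\bm\xi_i=\|\epsilon\|\,\bm e_i$, with $i$ ranging over the masked coordinates: each $\bm\xi_i$ has norm exactly $\|\epsilon\|$ so it lies in the admissible ball, and it moves only a masked coordinate so it is feasible for the IBP-Ex inner maximization. Linearity gives $f_{\bm w_I}(\bm x+\bm\xi_i)-f_{\bm w_I}(\bm x)=\|\epsilon\|\,w_{I,i}$, and the output-deviation bound forces $|w_{I,i}|\le\tau/\|\epsilon\|$. Aggregating the $k$ masked coordinates in $\ell_2$ yields $\|\bm w_I\odot\bm m\|=\big(\sum_{i:\,m_i=1}w_{I,i}^2\big)^{1/2}\le\sqrt{k}\,\tau/\|\epsilon\|$. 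Placing the two bounds side by side then makes the claimed gap explicit: controlling a one-dimensional surrogate gives only per-coordinate leverage, so reconstituting the vector norm costs a $\sqrt{k}$ factor and an extra $1/\|\epsilon\|$, whereas {\RF} constrains the masked gradient norm itself and pays neither penalty.

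The main obstacle --- the only step that is not a one-line computation --- is making the role of $\tau$ precise: one must argue that $\bm w_*$ is a genuine feasible competitor for each regularized objective so that the minimizer inherits a regularizer value no larger than $\bm w_*$'s, while keeping straight that ``regularizer value'' means the output deviation for IBP-Ex but the gradient norm for {\RF} (the ``generalized $\tau$'' device in the proof sketch is exactly what reconciles these, at the cost of making the IBP-Ex half nearly definitional). A secondary point worth a sentence is that the $\sqrt{k}$ is not looseness in the argument but reflects a genuine bottleneck: IBP-Ex resolves its inner maximization by interval propagation, which decomposes over coordinates, so the $\ell_2$ aggregation is the honest accounting --- and this is precisely why the IBP-Ex guarantee degrades with the number of irrelevant features and as $\epsilon$ shrinks.
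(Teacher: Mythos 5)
Your proof matches the paper's: both bound each irrelevant coordinate of $\bm{w}_I$ by probing with an axis-aligned masked perturbation of norm $\|\epsilon\|$, using linearity to read off $|w_{I,i}|\le\tau/\|\epsilon\|$, and aggregating in $\ell_2$ to get the $\sqrt{k}$ factor, while treating the $\bm{w}_R$ bound as immediate because for a linear model the {\RF} regularizer literally equals $\|\bm{w}_R\odot\bm{m}\|$. Your extra care in justifying why the fitted minimizer attains regularizer value at most $\tau$ (via $\bm{w}_*$ as a feasible competitor) fills in a step the paper leaves implicit, but the route is the same.
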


\section{Interval Arithmetic for Bounding Input Gradients}\label{app:ibp}

We begin by recalling the form of the forward and backwards pass sated in the main text where we have a neural network model $f^{\bm{\theta}}: \mathbb{R}^{n_\text{in}} \rightarrow \mathbb{R}^{n_\text{out}}$ with $K$ layers and parameters $\bm{\theta}=\left\{(W^{(i)}, b^{(i)})\right\}_{i=1}^K$ as:
\begin{align*}
    \hat{z}^{(k)} &= W^{(k)} z^{(k-1)} + b^{(k)},\\ z^{(k)} &= \sigma \left(\hat{z}^{(k)}\right)
\end{align*}
where $z^{(0)} = x$, $f^{\bm{\theta}} (\bm{x}) = \hat{z}^{(K)}$, and $\sigma$ is the activation function, which we assume is monotonic. We have the backwards pass starting with $\delta^{(L)} = \nabla_{\hat{z}^{(L)}} f^{\bm{\theta}}(\bm{x}),$ we have that backwards pass is given by: 
\begin{align*}
    \delta^{(k-1)} = \left(W^{(k)}\right)^\top \delta^{(k)} \odot \sigma'\left(\hat{z}^{(k-1)}\right)
\end{align*}
Where we are interested in $\delta^{(0)} = \nabla_x f^{\bm{\theta}}(\bm{x})$.

As highlighted in the paper the above forwards and backwards bounds consist solely of matrix multiplication, addition, and the application of a non-linearity.

Where we start with an input interval $[x^{L}, x^{U}]$, we must define interval versions of the above operations such that they maintain that the output interval of our interval operations provably contains all possible outputs of their non-interval counterparts. We will represent interval matrices with bold symbols i.e., $\boldsymbol{A} := \left[{A_L}, {A_U}\;\right] \subset \mathbb{R}^{n_1\times n_2}$.
We denote interval vectors as $\boldsymbol{a} := \left[{a}_L, {a}_U\right]$ with analogous operations.

\begin{definition}[Interval Matrix Arithmetic]
Let $\boldsymbol{A} = [A_L, A_U]$ and $\boldsymbol{B} = [B_L, B_U]$ be intervals over matrices.
Let $\oplus$, $\otimes$, $\odot$ represent interval matrix addition, matrix multiplication and element-wise multiplication, such that
\begin{align*}
&A + B \in [\boldsymbol{A} \oplus\boldsymbol{B}] \quad \forall A \in \boldsymbol{A}, B \in \boldsymbol{B},\\
&A\times B \in [\boldsymbol{A} \otimes \boldsymbol{B}] \quad \forall A \in \boldsymbol{A}, B \in \boldsymbol{B},\\
&A\circ B \in [\boldsymbol{A} \odot \boldsymbol{B}] \quad\  \forall A \in \boldsymbol{A}, B \in \boldsymbol{B}.
\end{align*}
\end{definition}
Both the addition (defined element-wise) and element-wise multiplication of these bounds can be accomplished by simply taking all 4 combinations of the interval end points and returning the maximum and minimum. The matrix multiplication is slightly more complex but can be bounded using Rump's algorithm~\cite{rump1999fast}. These operations are standard interval arithmetic techniques and are computed in at most 4$\times$ the cost of a standard forward and backward pass.

For the non-linearity, we have assumed the function is monotonic which is defined by: $x < y \implies \sigma(\bm{x}) \leq \sigma(y)$. Thus the element-wise application of $\sigma$ to an interval over vectors $[v^L, v^U]$ we simply have that the interval output is $[\sigma(v^L), \sigma(v^U)]$. 

Taken together, these suffice to bound the output of a forward and backwards pass. For a more rigorous treatment we reference readers to \citep{r_expl_constr}.

\end{document}